
\documentclass[twocolumn]{article}

\usepackage{microtype}
\usepackage{graphicx}
\usepackage{subfigure}
\usepackage{booktabs} 

\usepackage{hyperref}




\usepackage{amsmath}
\usepackage{amssymb}
\usepackage{mathtools}
\usepackage{amsthm}

\usepackage[capitalize,noabbrev]{cleveref}

\theoremstyle{plain}
\newtheorem{theorem}{Theorem}[section]

\newtheorem{lemma}[theorem]{Lemma}

\theoremstyle{definition}
\newtheorem{definition}[theorem]{Definition}

\newtheorem{remark}[theorem]{Remark}

\usepackage[textsize=tiny]{todonotes}

\usepackage{bm}
\newcommand{\argmax}{\mathop{\rm argmax}\limits}
\newcommand{\argmin}{\mathop{\rm argmin}\limits}

\newcommand{\targmin}{{\rm argmin}}

\newcommand{\sqtimes}{{\raisebox{0.08em}{$\times$}\hspace{-0.79em}\Box}}
\newcommand{\lammax}{{\lambda_\mathrm{max}}}


\newcommand*\samethanks[1][\value{footnote}]{\footnotemark[#1]} 
\title{Distributionally Robust Safe Screening}

\author{
Hiroyuki Hanada\thanks{RIKEN, Wako, Saitama, Japan}~\thanks{{\tt hiroyuki.hanada@riken.jp}}
\and Satoshi Akahane\thanks{Nagoya University, Nagoya, Aichi, Japan}
\and Tatsuya Aoyama\samethanks[3]
\and Tomonari Tanaka\samethanks[3]
\and Yoshito Okura\samethanks[3]
\and Yu Inatsu\thanks{Nagoya Institute of Technology, Nagoya, Aichi, Japan}
\and Noriaki Hashimoto\samethanks[1]
\and Taro Murayama\thanks{DENSO CORPORATION, Kariya, Aichi, Japan}
\and Lee Hanju\samethanks[5]
\and Shinya Kojima\samethanks[5]
\and Ichiro Takeuchi\samethanks[3]~\samethanks[1]~\thanks{{\tt ichiro.takeuchi@mae.nagoya-u.ac.jp}}
}
\begin{document}

\maketitle




\begin{abstract}
 In this study, we propose a method \emph{Distributionally Robust Safe Screening (DRSS)}, for identifying unnecessary samples and features within a DR covariate shift setting.
This method effectively combines DR learning, a paradigm aimed at enhancing model robustness against variations in data distribution, with safe screening (SS), a sparse optimization technique designed to identify irrelevant samples and features prior to model training.
The core concept of the DRSS method involves reformulating the DR covariate-shift problem as a weighted empirical risk minimization problem, where the weights are subject to uncertainty within a predetermined range.
By extending the SS technique to accommodate this weight uncertainty, the DRSS method is capable of reliably identifying unnecessary samples and features under any future distribution within a specified range.
We provide a theoretical guarantee of the DRSS method and validate its performance through numerical experiments on both synthetic and real-world datasets.

\end{abstract}

\section{Introduction} \label{sec:intro}
In this study, we consider the problem of identifying unnecessary samples and features in a class of supervised learning problems within dynamically changing environments.
Identifying unnecessary samples/features offers several benefits.
It helps in decreasing the storage space required for keeping the training data for updating the machine learning (ML) models in the future.
Moreover, in situations demanding real-time adaptation of ML models to quick environmental changes, the use of fewer samples/features enables more efficient learning.

Our basic idea to tackle this problem is to effectively combine \emph{distributionally robust (DR)} learning and \emph{safe screening (SS)}. 
DR learning is a ML paradigm that focuses on developing models robust to variations in the data distribution, providing performance guarantees across different distributions (see, e.g., \cite{chen2021distributionally}).
%
%
On the other hand, SS refers to sparse optimization techniques that can identify irrelevant samples/features before model training, ensuring computational efficiency by avoiding unnecessary computations on certain samples/features which do not contribute to the final solution~\cite{ghaoui2012safe,ogawa2013safe}.
The key technical idea of SS is to identify a bound of the optimal solution before solving the optimization problem.
This allows for the identification of unnecessary samples/features, even without knowing the optimal solution.

As a specific scenario of dynamically changing environment, we consider \emph{covariate shift} setting \cite{shimodaira2000improving,sugiyama2007covariate} with unknown test distribution.
In this setting, the distribution of input features in the training data may undergo changes in the test phase, yet the actual nature of these changes remains unknown. 
A ML problem (e.g., regression/classification problem) in covariate shift setting can be formulated as a \emph{weighted empirical risk minimization (weighted ERM)} problem, where weights are assigned based on the density ratio of each sample in the training and test distributions.
Namely, by assigning higher weights to training samples that are important in the test distribution, the model can focus on learning from relevant samples and mitigate the impact of distribution differences between the training and the test phases.
If the distribution during the test phase is known, the weights can be uniquely fixed.
However, if the test distribution is unknown, it is necessary to solve a weighted ERM problem with unknown weights. 

Our main contribution is to propose a DRSS method for covariate shift setting with unknown test distribution.
The proposed method can identify unnecessary samples/features regardless of how the distribution changes within a certain range in the test phase.
To address this problem, we extend the existing SS methods in two stages.
The first is to extend the SS for ERM so that it can be applied to weighted ERM.
The second is to further extend the SS so that it can be applied to weighted ERM when the weights are unknown.
While the first extension is relatively straightforward, the second extension presents a non-trivial technical challenge (Figure \ref{fig:concept}).
To overcome this challenge, we derive a novel bound of the optimal solutions of the weighted ERM problem, which properly accounts for the uncertainty in weights stemming from the uncertainty of the test distribution.

In this study, we consider DRSS for samples in sample-sparse models such as SVM~\cite{cortes1995support}, and that for features for feature-sparse models such as Lasso~\cite{tibshirani1996regression}. 
We denote the DRSS for samples as distributionally robust safe \emph{sample} screening (DRSsS) and that for features as distributionally robust safe \emph{feature} screening (DRSfS), respectively. 

Our contributions in this study are summarized as follows.
First, by effectively combining DR and SS, we introduce a framework for identifying unnecessary samples/features under dynamically changing uncertain environment.
Second, We consider a DR covariate-shift setting where the input distribution of an ERM problem changes within a certain range.
In this setting, we propose a novel method called DRSS method that can identify samples/features that are guaranteed not to affect the optimal solution, regardless of how the distribution changes within the specified range.
Finally, through numerical experiments, we verify the effectiveness of the proposed DRSS method. 
Although the DRSS method is developed for convex ERM problems, in order to demonstrate the applicability to deep learning models, we also present results where the DRSS method is applied in a problem setting where the final layer of the model is fine-tuned according to changes in the test distribution.

\begin{figure}[t]
\includegraphics[width=0.9\hsize]{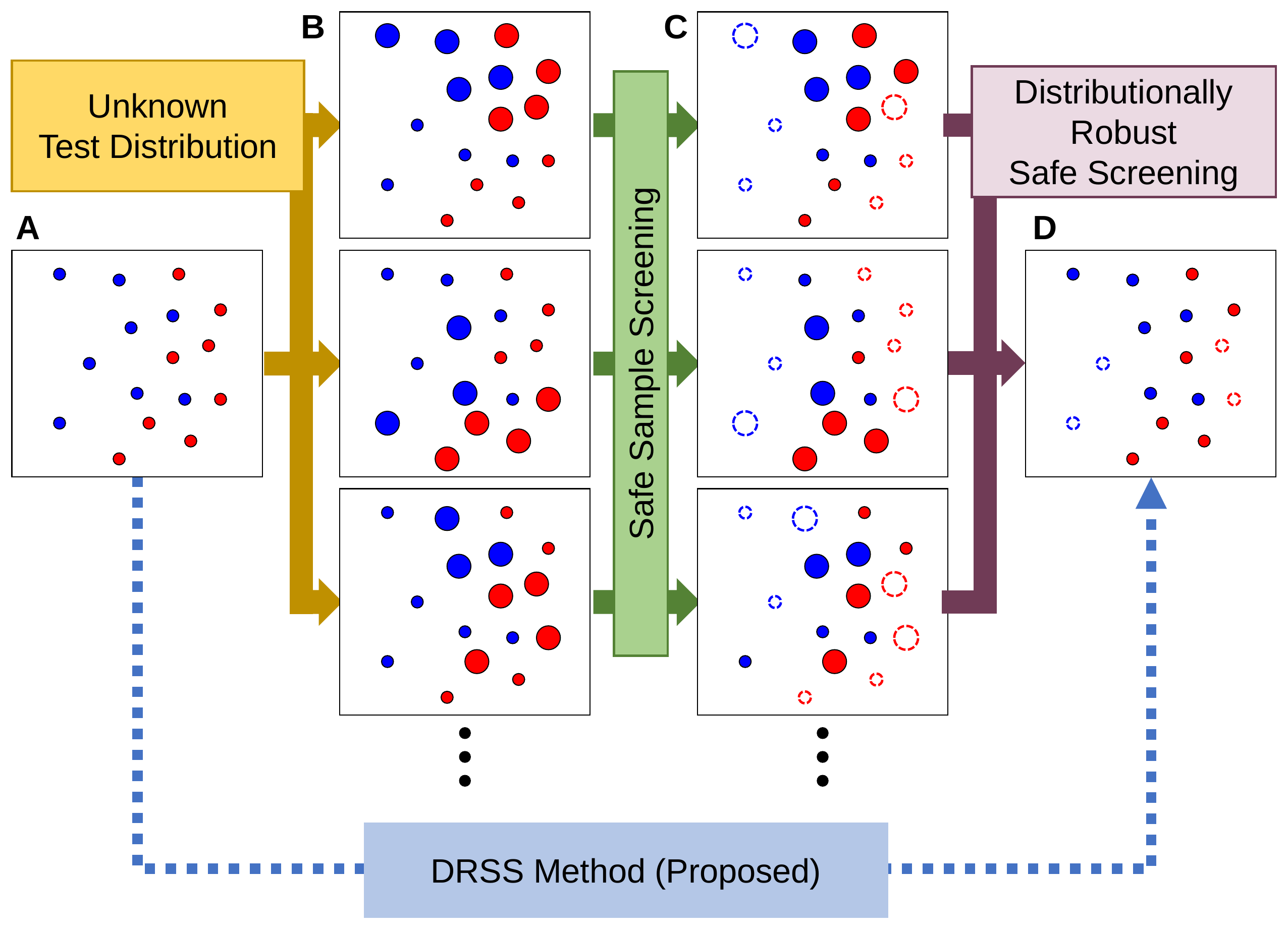}
\vspace{-1em}
\caption{
 Schematic illustration of the proposed Distributionally Robust Safe Screening (DRSS) method.
 Panel {\bf A} displays the training samples, each assigned equal weight, as indicated by the uniform size of the points.
 Panel {\bf B} depicts various unknown test distributions, highlighting how the significance of training samples varies with different realizations of the test distribution.
 Panel {\bf C} shows the outcomes of safe sample screening (SsS) across multiple realizations of test distributions.
 Finally, Panel {\bf D} presents the results of the proposed DRSS method, demonstrating its capability to identify redundant samples regardless of the observed test distribution.
 }
\label{fig:concept}
\end{figure}

\subsection{Related Works} \label{sec:related}
%
The DR setting has been explored in various ML problems, aiming to enhance model robustness against data distribution variations.
A DR learning problem is typically formulated as a worst-case optimization problem since the goal of DR learning is to ensure model performance under the worst-case data distribution within a specified range.
Hence, a variety of optimization techniques tailored to DR learning have been investigated within both the ML and optimization communities~\cite{goh2010distributionally,delage2010distributionally,chen2021distributionally}.
The proposed DRSS method is one of such DR learning methods, focusing specifically on the problem of sample/feature deletion.
The ability to identify irrelevant samples/features is of practical significance.
For example, in the context of continual learning (see, e.g., \cite{wang2022memory}), it is crucial to effectively manage data by selectively retaining and discarding samples/features, especially in anticipation of changes in future data distributions.
Incorrect deletion of essential data can lead to \emph{catastrophic forgetting}~\cite{kirkpatrick2017overcoming}, a phenomenon where a ML model, after being trained on new data, quickly loses information previously learned from older datasets.
The proposed DRSS method tackles this challenge by identifying samples/features that, regardless of future data distribution shifts, will not have any influence on all possible newly trained model in the future. 

SS refers to optimization techniques in sparse learning that identify and exclude irrelevant samples or features from the learning process.
SS can reduce computational cost without changing the final trained model.
Initially, SfS was introduced by \cite{ghaoui2012safe} for the Lasso.
Subsequently, SsS was proposed by \cite{ogawa2013safe} for the SVM.
Among various SS methods developed so far, the most commonly used is based on the duality gap~\cite{fercoq2015mind,ndiaye2015gap}.
Our proposed DRSS method also adopts this approach.
Over the past decade, SS has seen diverse developments, including methodological improvements and expanded application scopes~\cite{okumura2015quick,shibagaki2016simultaneous,nakagawa2016safe,ren2018safe,zhao2019improved,zhai2020safe,wang2022safe,yoshida2023efficient}.
Unlike other SS studies that primarily focused on reducing computational costs, this study adopts SS for a different purpose.
We employ SS across scenarios where data distribution varies within a defined range, aiming to discard unnecessary samples/features.
To our knowledge, no existing studies have utilized SS within the DR learning framework.

\section{Preliminaries} \label{sec:preliminaries}

Notations used in this paper are described in Table \ref{tab:definitions}.

\begin{table}[t]
\caption{Notations used in the paper. $\mathbb{R}$: all real numbers, $\mathbb{N}$: all positive integers, $n, m, p\in\mathbb{N}$: integers, $f: \mathbb{R}^n\to\mathbb{R}\cup\{+\infty\}$: convex function, $M\in\mathbb{R}^{n\times m}$: matrix, $\bm v\in\mathbb{R}^n$: vector.}
\label{tab:definitions}
\begin{tabular}{ll}
\hline
\multicolumn{2}{l}{$m_{ij} \in \mathbb{R}$ (small case of matrix variable)} \\
	& the element at the $i^\mathrm{th}$ row and \\
	& the $j^\mathrm{th}$ column of $M$ \\
\multicolumn{2}{l}{$v_i \in \mathbb{R}$ (nonbold font of vector variable)} \\
	& the $i^\mathrm{th}$ element of $\bm v$\\
$M_{i:}\in\mathbb{R}^{1\times n}$ & the $i^\mathrm{th}$ row of $M$ \\
$M_{:j}\in\mathbb{R}^{m\times 1}$ & the $j^\mathrm{th}$ column of $M$ \\
$[n]$ & $\{1, 2, \dots, n\}$ \\
$\mathbb{R}_{\geq 0}$ & all nonnegative real numbers \\
$\otimes$ & elementwise product \\
$\mathrm{diag}(\bm v) \in\mathbb{R}^{n\times n}$
	& diagonal matrix; $(\mathrm{diag}(\bm v))_{ii} = v_i$ \\
	& and $(\mathrm{diag}(\bm v))_{ij} = 0$ ($i\neq j$) \\
$\bm v\sqtimes M \in\mathbb{R}^{n\times m}$ & $\mathrm{diag}(\bm v) M$ \\
$\bm 0_n \in \mathbb{R}^n$ & $[0, 0, \dots, 0]^\top$ (vector of size $n$) \\
$\bm 1_n \in \mathbb{R}^n$ & $[1, 1, \dots, 1]^\top$ (vector of size $n$) \\
$\|\bm v\|_p \in \mathbb{R}_{\geq 0}$ & $(\sum_{i=1}^n v_i^p)^{1/p}$ (\emph{$p$-norm}) \\
$\partial f(\bm v) \subseteq \mathbb{R}^n$
	& all $\bm g\in\mathbb{R}^n$ s.t. ``for any $\bm v^\prime\in\mathbb{R}^n$, \\
	& $f(\bm v^\prime) - f(\bm v)\geq \bm g^\top(\bm v^\prime - \bm v)$'' \\
	&  (\emph{subgradient}) \\
${\cal Z}[f] \subseteq\mathbb{R}^n$ & $\{ \bm v^\prime\in\mathbb{R}^n \mid \partial f(\bm v^\prime) = \{\bm 0_n\} \}$ \\
$f^*(\bm v) \in \mathbb{R}\cup\{+\infty\}$
	& $\sup_{\bm v^\prime\in\mathbb{R}^n} (\bm v^\top \bm v^\prime - f(\bm v^\prime))$ \\
	& (\emph{convex conjugate}) \\
``$f$ is $\kappa$-strongly
	& $f(\bm v) - \kappa\|\bm v\|_2^2$ is convex with \\
	\quad convex'' ($\kappa>0$)
		&\quad respect to $\bm v$ \\
``$f$ is $\mu$-smooth''
	& $\|f(\bm v) - f(\bm v^\prime)\|_2\leq \mu\|\bm v - \bm v^\prime\|_2$ \\
	\quad ($\mu > 0$)
		&\quad for any $\bm v, \bm v^\prime\in\mathbb{R}^n$ \\
\hline
\end{tabular}
\end{table}

\subsection{Weighted Regularized Empirical Risk Minimization (Weighted RERM) for Linear Prediction} \label{sec:weighted-RERM}

We mainly assume the weighted regularized empirical risk minimization (weighted RERM) for linear prediction.
This may include kernelized versions, which are discussed in Appendix \ref{app:kernelized}.
Suppose that we learn the model parameters as linear prediction coefficients, that is,
learn $\bm\beta^{*(\bm w)}\in\mathbb{R}^d$ such that the outcome for a sample $\bm x\in\mathbb{R}^d$ is predicted as $\bm x^\top\bm\beta^{*(\bm w)}$.

\begin{definition}\label{def:WRERM}
Given $n$ training samples of $d$-dimensional input variables, scalar output variables
and scalar sample weights, denoted by $X\in\mathbb{R}^{n\times d}$, $\bm y\in\mathbb{R}^n$ and
$\bm w\in\mathbb{R}_{\geq 0}^n$, respectively,
the training computation of weighted RERM for linear prediction is formulated as follows:
\begin{align}
& \bm\beta^{*(\bm w)} := \argmin_{\bm\beta\in\mathbb{R}^d} P_{\bm w}(\bm\beta), \nonumber
	\quad\text{where} \\
& P_{\bm w}(\bm\beta) := \sum_{i=1}^n w_i \ell_{y_i}(\check{X}_{i:}\bm\beta) + \rho(\bm\beta). \label{eq:primal}
\end{align}
Here, $\ell_y: \mathbb{R}\to\mathbb{R}$ is a convex {\em loss function}\footnote{For $\ell_y(t)$, we assume that only $t$ is a variable of the function ($y$ is assumed to be a constant) when we take its subgradient or convex conjugate.}, $\rho: \mathbb{R}^d\to\mathbb{R}$ is a convex {\em regularization function}, and $\check{X}\in\mathbb{R}^{n\times d}$ is a matrix calculated from $X$ and $\bm y$ and determined depending on $\ell$.
In this paper, unless otherwise noted, we consider binary classifications ($\bm y\in\{-1, +1\}^n$)
with $\check{X} := \bm y \sqtimes X$.
For regressions ($\bm y\in\mathbb{R}^n$) we usually set $\check{X} := X$.
\end{definition}
\begin{remark} \label{rem:intercept}
We add that, we adopt the formulation $X_{:d} = \bm 1_n$ so that $\beta^{*(\bm w)}_d$ (the last element) represents the common coefficient for any sample (called the \emph{intercept}).
\end{remark}
Since $\ell$ and $\rho$ are convex, we can easily confirm that $P_{\bm w}(\bm\beta)$ is convex with respect to $\bm\beta$.

Applying {\em Fenchel's duality theorem} (Appendix \ref{app:fenchel}), we have the following {\em dual problem} of \eqref{eq:primal}:
\begin{align}
& \bm\alpha^{*(\bm w)} := \argmax_{\bm\alpha\in\mathbb{R}^n} D_{\bm w}(\bm\alpha),
	\quad\text{where} \nonumber \\
& D_{\bm w}(\bm\alpha) := \label{eq:dual} \\
& -\sum_{i=1}^n w_i \ell^*_{y_i}(-\gamma_i \alpha_i) - \rho^*(((\bm\gamma \otimes \bm w)\sqtimes\check{X})^\top \bm\alpha), \nonumber
\end{align}
where $\bm\gamma$ is a positive-valued vector.
The relationship between the original problem \eqref{eq:primal} (called the \emph{primal} problem) and the dual problem \eqref{eq:dual} are described as follows:
\begin{align}
& P_{\bm w}(\bm\beta^{*(\bm w)}) = D_{\bm w}(\bm\alpha^{*(\bm w)}), \label{eq:strong-duality}\\
& \bm\beta^{*(\bm w)} \in \partial\rho^*(((\bm\gamma \otimes \bm w)\sqtimes\check{X})^\top \bm\alpha^{*(\bm w)}), \label{eq:KKT-dual2primal}\\
& \forall i\in[n]:\quad -\gamma_i\alpha^{*(\bm w)}_i \in \partial\ell_{y_i}(\check{X}_{i:}\bm\beta^{*(\bm w)}). \label{eq:KKT-primal2dual}
\end{align}

\subsection{Sparsity-inducing Loss Functions and Regularization Functions}

In weighted RERM, we call that a loss function $\ell$ induces \emph{sample-sparsity} if elements in $\bm\alpha^{*(\bm w)}$ are easy to become zero.
Due to \eqref{eq:KKT-primal2dual}, this can be achieved by $\ell$ such that $\{ t \in\mathbb{R} \mid 0 \in \partial\ell_{y}(t) \}$ is not a point but an interval.

Similarly, we call that a regularization function $\rho$ induces \emph{feature-sparsity} if elements in $\bm\beta^{*(\bm w)}$ are easy to become zero.
Due to \eqref{eq:KKT-dual2primal}, this can be achieved by $\rho$ such that $\{ \bm v \in\mathbb{R}^d \mid \exists j\in[d-1]:~0 \in [\partial\rho^*(\bm v)]_j \}$ is not a point but a region.

For example, the \emph{hinge loss} $\ell_{y}(t) = \max\{0, 1-t\}$ ($y\in\{-1, +1\}$) is a sample-sparse loss function since $\{ t \in\mathbb{R} \mid 0 \in \partial\ell_{y}(t) \} = [1, +\infty)$.
Similarly, the \emph{L1-regularization} $\rho(\bm v) = \lambda \sum_{j=1}^{d-1} |v_j|$ ($\lambda > 0$: hyperparameter) is a feature-sparse regularization function since
$\{ \bm v \in\mathbb{R}^d \mid \exists j\in[d-1]:~0 \in [\partial\rho^*(\bm v)]_j \} = \{ \bm v \in\mathbb{R}^d \mid \exists j\in[d-1]:~|v_j|\leq\lambda,~v_d = 0 \}$.
See Section \ref{sec:DRSS-examples} for examples of using them.


\section{Distributionally Robust Safe Screening}

In this section we show DRSS rules for weighted RERM with two steps.
First, in Sections \ref{sec:safe-sample-screening} and \ref{sec:safe-feature-screening},
we show SS rules for weighted RERM but not DR setup.
To do this, we extended existing SS rules in \cite{ndiaye2015gap,shibagaki2016simultaneous}.
Then we derive DRSS rules in Section \ref{sec:DRSS}.

\subsection{(Non-DR) Safe Sample Screening} \label{sec:safe-sample-screening}

We consider identifying training samples that do not affect the training result $\bm\beta^{*(\bm w)}$.
Due to the relationship \eqref{eq:KKT-dual2primal}, if there exists $i\in[n]$ such that $\alpha^{*(\bm w)}_i = 0$,
then the $i^\mathrm{th}$ row (sample) in $\check{X}$ does not affect $\bm\beta^{*(\bm w)}$.
However, since computing $\bm\alpha^{*(\bm w)}$ is as costly as $\bm\beta^{*(\bm w)}$, it is difficult to use the relationship as it is.
To solve the problem, the SsS first considers identifying the possible region ${\cal B}^{*(\bm w)}\subset\mathbb{R}^d$ such that $\bm\beta^{*(\bm w)}\in{\cal B}^{*(\bm w)}$ is assured.
Then, with ${\cal B}^{*(\bm w)}$ and \eqref{eq:KKT-primal2dual}, we can conclude that the $i^\mathrm{th}$ training sample do not affect the training result $\bm\beta^{*(\bm w)}$ if
$\bigcup_{\bm\beta\in{\cal B}^{*(\bm w)}} \partial\ell_{y_i}(\check{X}_{i:}\bm\beta) = \{0\}$.

First we show how to compute ${\cal B}^{*(\bm w)}$.
In this paper we adopt the computation methods that is available when the regularization function $\rho$ in $P_{\bm w}$ (and also $P_{\bm w}$ itself) of \eqref{eq:primal} are strongly convex.

\begin{lemma}\label{lem:gap-sphere-primal}
Suppose that $\rho$ in $P_{\bm w}$ (and also $P_{\bm w}$ itself) of \eqref{eq:primal} are $\kappa$-strongly convex.
Then, for any $\hat{\bm\beta}\in\mathbb{R}^d$ and $\hat{\bm\alpha}\in\mathbb{R}^n$, we can assure $\bm\beta^{*(\bm w)}\in{\cal B}^{*(\bm w)}$ by taking
\begin{align*}
& {\cal B}^{*(\bm w)} := \left\{ \bm\beta \;\middle|\; \| \bm\beta - \hat{\bm\beta} \|_2 \leq r(\bm w, \bm\gamma, \kappa, \hat{\bm\beta}, \hat{\bm\alpha}) \right\}, \\
& \text{where}\quad
	r(\bm w, \bm\gamma, \kappa, \hat{\bm\beta}, \hat{\bm\alpha}) := \sqrt{\frac{2}{\kappa}[P_{\bm w}(\hat{\bm\beta}) - D_{\bm w}(\hat{\bm\alpha})]}.
\end{align*}
\end{lemma}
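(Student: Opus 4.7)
The plan is to derive the bound via the standard GAP safe screening argument: strong convexity of $P_{\bm w}$ at its minimizer converts the duality gap $P_{\bm w}(\hat{\bm\beta}) - D_{\bm w}(\hat{\bm\alpha})$ into a quadratic upper bound on $\|\bm\beta^{*(\bm w)} - \hat{\bm\beta}\|_2^2$, from which the sphere radius follows by rearrangement.

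First I would invoke $\kappa$-strong convexity of $P_{\bm w}$ to write, for any $\bm g \in \partial P_{\bm w}(\bm\beta^{*(\bm w)})$,
\begin{equation*}
P_{\bm w}(\hat{\bm\beta}) \geq P_{\bm w}(\bm\beta^{*(\bm w)}) + \bm g^\top(\hat{\bm\beta} - \bm\beta^{*(\bm w)}) + \tfrac{\kappa}{2}\|\hat{\bm\beta} - \bm\beta^{*(\bm w)}\|_2^2.
\end{equation*}
Since $\bm\beta^{*(\bm w)}$ is the unconstrained minimizer of the convex function $P_{\bm w}$, the first-order optimality condition gives $\bm 0_d \in \partial P_{\bm w}(\bm\beta^{*(\bm w)})$, so choosing $\bm g = \bm 0_d$ eliminates the linear term and leaves
\begin{equation*}
P_{\bm w}(\hat{\bm\beta}) - P_{\bm w}(\bm\beta^{*(\bm w)}) \geq \tfrac{\kappa}{2}\|\hat{\bm\beta} - \bm\beta^{*(\bm w)}\|_2^2.
\end{equation*}

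Next I would use the strong duality relation \eqref{eq:strong-duality}, $P_{\bm w}(\bm\beta^{*(\bm w)}) = D_{\bm w}(\bm\alpha^{*(\bm w)})$, together with weak duality $D_{\bm w}(\hat{\bm\alpha}) \leq D_{\bm w}(\bm\alpha^{*(\bm w)})$ for the dual maximizer, to replace $P_{\bm w}(\bm\beta^{*(\bm w)})$ by a quantity at least as large as $D_{\bm w}(\hat{\bm\alpha})$. This yields
\begin{equation*}
P_{\bm w}(\hat{\bm\beta}) - D_{\bm w}(\hat{\bm\alpha}) \geq P_{\bm w}(\hat{\bm\beta}) - P_{\bm w}(\bm\beta^{*(\bm w)}) \geq \tfrac{\kappa}{2}\|\hat{\bm\beta} - \bm\beta^{*(\bm w)}\|_2^2.
\end{equation*}
Rearranging and taking square roots gives the claimed radius $r(\bm w,\bm\gamma,\kappa,\hat{\bm\beta},\hat{\bm\alpha})$, and hence $\bm\beta^{*(\bm w)} \in {\cal B}^{*(\bm w)}$.

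The only subtlety I anticipate is the justification that $\bm 0_d \in \partial P_{\bm w}(\bm\beta^{*(\bm w)})$ — i.e.\ the applicability of Fermat's rule for the (possibly non-smooth) subdifferential — but since $P_{\bm w}$ is a proper convex function on all of $\mathbb{R}^d$ attaining its minimum at $\bm\beta^{*(\bm w)}$, this is immediate. The duality-gap nonnegativity used in the last step is guaranteed by weak duality, which always holds for Fenchel duals regardless of the strong convexity assumption, so no further qualification is needed. Everything else is a mechanical rearrangement.
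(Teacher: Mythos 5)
Your proof is correct and follows essentially the same route as the paper: bound $\|\hat{\bm\beta}-\bm\beta^{*(\bm w)}\|_2$ via $\kappa$-strong convexity of $P_{\bm w}$ at its minimizer, replace $P_{\bm w}(\bm\beta^{*(\bm w)})$ by $D_{\bm w}(\bm\alpha^{*(\bm w)})$ using strong duality \eqref{eq:strong-duality}, and then use $D_{\bm w}(\hat{\bm\alpha})\leq D_{\bm w}(\bm\alpha^{*(\bm w)})$ since $\bm\alpha^{*(\bm w)}$ maximizes $D_{\bm w}$. The only cosmetic difference is that you derive the strong-convexity ball bound inline (via Fermat's rule and the choice $\bm g=\bm 0_d$) instead of invoking the paper's Lemma \ref{lem:strong-convexity-sphere}, which is the same argument.
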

The proof is presented in Appendix \ref{app:gap-sphere-primal}.
The amount $P_{\bm w}(\hat{\bm\beta}) - D_{\bm w}(\hat{\bm\alpha})$ is known as the \emph{duality gap}, which must be nonnegative due to \eqref{eq:strong-duality}.
So we obtain the following \emph{gap safe sample screening rule} from Lemma \ref{lem:gap-sphere-primal}:
\begin{lemma}\label{lem:gap-sample-screening}
Under the same assumptions as Lemma \ref{lem:gap-sphere-primal}, $\alpha_i^{*(\bm w)} = 0$ is assured
(i.e., the $i^\mathrm{th}$ training sample does not affect the training result $\bm\beta^{*(\bm w)}$)
if there exists $\hat{\bm\beta}\in\mathbb{R}^d$ and $\hat{\bm\alpha}\in\mathbb{R}^n$ such that
\begin{align*}
& [\check{X}_{i:}\hat{\bm\beta} - \|\check{X}_{i:}\|_2 r(\bm w, \bm\gamma, \kappa, \hat{\bm\beta}, \hat{\bm\alpha}), \\
& \quad \check{X}_{i:}\hat{\bm\beta} + \|\check{X}_{i:}\|_2 r(\bm w, \bm\gamma, \kappa, \hat{\bm\beta}, \hat{\bm\alpha})] \subseteq {\cal Z}[\ell_{y_i}].
\end{align*}
\end{lemma}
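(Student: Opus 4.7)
The plan is to chain the primal localization from Lemma~\ref{lem:gap-sphere-primal} with the primal--dual optimality condition \eqref{eq:KKT-primal2dual}, and then invoke the definition of ${\cal Z}[\ell_{y_i}]$. First I would apply Lemma~\ref{lem:gap-sphere-primal} with the given $\hat{\bm\beta}$ and $\hat{\bm\alpha}$ to conclude that $\bm\beta^{*(\bm w)}$ lies in the Euclidean ball centered at $\hat{\bm\beta}$ of radius $r := r(\bm w, \bm\gamma, \kappa, \hat{\bm\beta}, \hat{\bm\alpha})$.

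Next I would propagate this localization to the one-dimensional linear functional $\bm\beta \mapsto \check{X}_{i:}\bm\beta$. By the Cauchy--Schwarz inequality,
\[
|\check{X}_{i:}(\bm\beta^{*(\bm w)} - \hat{\bm\beta})| \;\leq\; \|\check{X}_{i:}\|_2\, \|\bm\beta^{*(\bm w)} - \hat{\bm\beta}\|_2 \;\leq\; \|\check{X}_{i:}\|_2\, r,
\]
so $\check{X}_{i:}\bm\beta^{*(\bm w)}$ belongs to the closed interval stated in the hypothesis. By assumption that interval is a subset of ${\cal Z}[\ell_{y_i}]$, so by the definition of ${\cal Z}[\cdot]$ we obtain $\partial\ell_{y_i}(\check{X}_{i:}\bm\beta^{*(\bm w)}) = \{0\}$.

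Finally, substituting into the KKT condition \eqref{eq:KKT-primal2dual} gives $-\gamma_i \alpha_i^{*(\bm w)} \in \{0\}$, and since $\bm\gamma$ is positive-valued this forces $\alpha_i^{*(\bm w)} = 0$. By \eqref{eq:KKT-dual2primal} the $i^\mathrm{th}$ row of $\check{X}$ then drops out of the expression determining $\bm\beta^{*(\bm w)}$, so the sample is safely screened. I do not expect a real obstacle here: Lemma~\ref{lem:gap-sphere-primal} does the heavy lifting, and the remaining steps are purely a routine Cauchy--Schwarz bound followed by an unwinding of the definitions of ${\cal Z}[\ell_{y_i}]$ and the KKT condition. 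The only point requiring a bit of care is to note that the interval in the hypothesis is in fact the exact image of the ball under $\check{X}_{i:}$ (the endpoints being attained at $\hat{\bm\beta} \pm r\,\check{X}_{i:}^\top / \|\check{X}_{i:}\|_2$), so the condition is not needlessly loose.
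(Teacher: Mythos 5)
Your proposal is correct and follows essentially the same route as the paper's proof: localize $\bm\beta^{*(\bm w)}$ in the gap-based ball from Lemma~\ref{lem:gap-sphere-primal}, observe that the image of that ball under $\bm\beta\mapsto\check{X}_{i:}\bm\beta$ is exactly the stated interval (the paper does this via Lemma~\ref{lem:optimize-linear}, you via Cauchy--Schwarz, which is the same computation), and then conclude $\alpha_i^{*(\bm w)}=0$ from the definition of ${\cal Z}[\ell_{y_i}]$ and the optimality condition \eqref{eq:KKT-primal2dual} with $\gamma_i>0$. No gaps; the only cosmetic difference is that the paper phrases the argument through the union $\bigcup_{\bm\beta\in{\cal B}^{*(\bm w)}}\partial\ell_{y_i}(\check{X}_{i:}\bm\beta)=\{0\}$, which you bypass by evaluating directly at $\bm\beta^{*(\bm w)}$.
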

The proof is presented in Appendix \ref{app:gap-sample-screening}.

\subsection{(Non-DR) Safe Feature Screening} \label{sec:safe-feature-screening}

We consider identifying $j\in[d]$ such that $\beta_j^{*(\bm w)} = 0$, that is, 
identifying that the $j^\mathrm{th}$ feature is not used in the prediction,
even when the sample weights $\bm w$ are changed.

For simplicity, suppose that the regularization function $\rho$ is decomposable, that is, $\rho$ is represented as $\rho(\bm\beta) := \sum_{j=1}^d \sigma_j(\beta_j)$ ($\sigma_1, \sigma_2, \dots, \sigma_d$: $\mathbb{R}\to\mathbb{R}$).
Then, since $\rho^*(\bm v) = \sum_{j=1}^d \sigma^*_j(v_j)$ and
therefore $[\partial\rho^*(\bm v)]_j = \partial\sigma^*_j(v_j)$,
from \eqref{eq:KKT-dual2primal} we have
\begin{align*}
\beta^{*(\bm w)}_j &\in \partial\sigma^*_j((\bm\gamma \otimes \bm w\otimes\check{X}_{:j})^\top \bm\alpha^{*(\bm w)}) \\
& = \partial\sigma^*_j(\check{\check{X}}_{:j}^{(\bm\gamma,\bm w)\top} \bm\alpha^{*(\bm w)}), \\
\text{where} & \quad \check{\check{X}}_{:j}^{(\bm\gamma,\bm w)} := \bm\gamma \otimes \bm w \otimes \check{X}_{:j}.
\end{align*}
If we know $\bm\alpha^{*(\bm w)}$, we can identify whether $\beta_j^{*(\bm w)} = 0$ holds.
However, like SsS (Section \ref{sec:safe-sample-screening}), we would like to check the condition without computing $\bm\alpha^{*(\bm w)}$ or $\bm\beta^{*(\bm w)}$.

So, like SsS,
SfS first considers identifying the possible region ${\cal A}^{*(\bm w)}\subset\mathbb{R}^n$
such that $\bm\alpha^{*(\bm w)}\in{\cal A}^{*(\bm w)}$ is assured.
Then we can conclude that $\beta^{*(\bm w)}_j = 0$ is assured if
$\bigcup_{\bm\alpha\in{\cal A}^{*(\bm w)}} \partial\sigma^*_j(\check{\check{X}}_{:j}^{(\bm\gamma,\bm w)\top} \bm\alpha) = \{0\}$.

Then we show how to compute ${\cal A}^{*(\bm w)}$.
With Lemma \ref{lem:strong-convexity-sphere}, we can calculate ${\cal A}^{*(\bm w)}$
as follows, if the loss function $\ell_y$ in $P_{\bm w}$ of \eqref{eq:primal} is smooth:
\begin{lemma}\label{lem:gap-sphere-dual}
Suppose that $\ell_y$ in $P_{\bm w}$ of \eqref{eq:primal} is $\mu$-smooth.
Then, for any $\hat{\bm\beta}\in\mathbb{R}^d$ and $\hat{\bm\alpha}\in\mathbb{R}^n$, we can assure $\bm\alpha^{*(\bm w)}\in{\cal A}^{*(\bm w)}$ by taking
\begin{align*}
& {\cal A}^{*(\bm w)} := \left\{ \bm\alpha \;\middle|\; \| \bm\alpha - \hat{\bm\alpha} \|_2 \leq \bar{r}(\bm w, \bm\gamma, \mu, \hat{\bm\beta}, \hat{\bm\alpha}) \right\}, \\
& \text{where}\quad
	\bar{r}(\bm w, \bm\gamma, \mu, \hat{\bm\beta}, \hat{\bm\alpha}) := \\
	& \phantom{\text{where}}\quad
	\sqrt{\frac{2 \mu}{\min_{i\in[n]} w_i \gamma_i^2}[P_{\bm w}(\hat{\bm\beta}) - D_{\bm w}(\hat{\bm\alpha})]}.
\end{align*}
\end{lemma}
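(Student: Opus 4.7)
The plan is to mirror the proof of Lemma \ref{lem:gap-sphere-primal}, but on the dual side. I would invoke the standard Fenchel correspondence: if $\ell_y$ is $\mu$-smooth then $\ell_y^*$ is $(1/\mu)$-strongly convex. The goal is to establish strong convexity of $-D_{\bm w}$ with modulus $\min_{i\in[n]} w_i\gamma_i^2/\mu$, then apply the same strong-convexity/duality-gap ball bound (Lemma \ref{lem:strong-convexity-sphere}) used for the primal, combined with weak duality.

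First I would decompose
\[
-D_{\bm w}(\bm\alpha) = \sum_{i=1}^n w_i \ell_{y_i}^*(-\gamma_i \alpha_i) + \rho^*\bigl(((\bm\gamma\otimes\bm w)\sqtimes\check X)^\top\bm\alpha\bigr).
\]
The second summand is convex in $\bm\alpha$, as the composition of the convex $\rho^*$ with a linear map. The first summand is separable, and each coordinatewise term $\alpha_i \mapsto w_i \ell_{y_i}^*(-\gamma_i\alpha_i)$ is $(w_i\gamma_i^2/\mu)$-strongly convex: the $(1/\mu)$-strong convexity of $\ell_{y_i}^*$ is scaled by $\gamma_i^2$ under precomposition with $t\mapsto -\gamma_i t$ and by $w_i\geq 0$ under multiplication. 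Because the sum is diagonal, its $\|\cdot\|_2$-modulus of strong convexity is the minimum over coordinates, i.e.\ $\min_i w_i\gamma_i^2/\mu$, and adding the convex second term cannot reduce this modulus. Hence $-D_{\bm w}$ is $(\min_i w_i\gamma_i^2/\mu)$-strongly convex.

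Since $\bm\alpha^{*(\bm w)}$ minimizes $-D_{\bm w}$, the strong-convexity inequality at the minimizer yields
\[
-D_{\bm w}(\hat{\bm\alpha}) + D_{\bm w}(\bm\alpha^{*(\bm w)}) \geq \frac{\min_i w_i\gamma_i^2}{2\mu}\|\hat{\bm\alpha}-\bm\alpha^{*(\bm w)}\|_2^2.
\]
Applying weak duality $D_{\bm w}(\bm\alpha^{*(\bm w)})\leq P_{\bm w}(\hat{\bm\beta})$ (a consequence of \eqref{eq:strong-duality} applied to the true optima together with $P_{\bm w}(\hat{\bm\beta})\geq P_{\bm w}(\bm\beta^{*(\bm w)})$) gives
\[
\|\hat{\bm\alpha}-\bm\alpha^{*(\bm w)}\|_2^2 \leq \frac{2\mu}{\min_i w_i\gamma_i^2}\bigl(P_{\bm w}(\hat{\bm\beta}) - D_{\bm w}(\hat{\bm\alpha})\bigr),
\]
and taking square roots reproduces exactly the radius $\bar r(\bm w,\bm\gamma,\mu,\hat{\bm\beta},\hat{\bm\alpha})$. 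The main technical obstacle is the careful accounting of the strong-convexity modulus through composition, scaling, and coordinatewise summation; the rest is a direct dualization of the primal argument. An implicit assumption needed for the bound to be finite is $\min_i w_i\gamma_i^2 > 0$: coordinates with $w_i=0$ leave the corresponding $\alpha_i$ unconstrained by $D_{\bm w}$, so in the DR setting of Section \ref{sec:DRSS} the weight range should be restricted to keep this minimum strictly positive.
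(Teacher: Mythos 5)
Your proposal is correct and follows essentially the same route as the paper: establish that $-D_{\bm w}$ is $\bigl((\min_{i\in[n]} w_i\gamma_i^2)/\mu\bigr)$-strongly convex via the smoothness/strong-convexity conjugacy, the $\gamma_i^2$ and $w_i$ scalings, separability of the loss-conjugate sum, and convexity of the $\rho^*$ term, then apply the strong-convexity ball bound at the dual maximizer together with $D_{\bm w}(\bm\alpha^{*(\bm w)})=P_{\bm w}(\bm\beta^{*(\bm w)})\leq P_{\bm w}(\hat{\bm\beta})$. Your added remark that the bound requires $\min_{i\in[n]} w_i\gamma_i^2>0$ is a valid observation left implicit in the paper.
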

The proof is presented in Appendix \ref{app:gap-sphere-dual}.
Similar to Lemma \ref{lem:gap-sample-screening}, we obtain the \emph{gap safe feature screening rule}
from Lemma \ref{lem:gap-sphere-dual}:
\begin{lemma}\label{lem:gap-feature-screening}
Under the same assumptions as Lemma \ref{lem:gap-sphere-dual}, $\beta_j^{*(\bm w)} = 0$ is assured
(i.e., the $j^\mathrm{th}$ feature does not affect prediction results)
if there exists $\hat{\bm\beta}\in\mathbb{R}^d$ and $\hat{\bm\alpha}\in\mathbb{R}^n$ such that
\begin{align*}
& [\check{\check{X}}_{:j}^{(\bm\gamma,\bm w)\top} \hat{\bm\alpha} - \|\check{\check{X}}_{:j}^{(\bm\gamma,\bm w)}\|_2 \bar{r}(\bm w, \bm\gamma, \mu, \hat{\bm\beta}, \hat{\bm\alpha}), \\
& \quad\check{\check{X}}_{:j}^{(\bm\gamma,\bm w)\top} \hat{\bm\alpha} + \|\check{\check{X}}_{:j}^{(\bm\gamma,\bm w)}\|_2 \bar{r}(\bm w, \bm\gamma, \mu, \hat{\bm\beta}, \hat{\bm\alpha})] \subseteq {\cal Z}[\sigma^*_j].
\end{align*}
\end{lemma}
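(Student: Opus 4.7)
The plan is to mirror the structure of Lemma \ref{lem:gap-sample-screening} but on the dual side, using the KKT relation \eqref{eq:KKT-dual2primal} together with the dual-side localization provided by Lemma \ref{lem:gap-sphere-dual}. The starting point is the decomposable regularizer identity derived just before Lemma \ref{lem:gap-sphere-dual}: for each $j\in[d]$,
\begin{equation*}
\beta^{*(\bm w)}_j \in \partial\sigma^*_j\bigl(\check{\check{X}}_{:j}^{(\bm\gamma,\bm w)\top} \bm\alpha^{*(\bm w)}\bigr).
\end{equation*}
Hence it suffices to show that the scalar argument $\check{\check{X}}_{:j}^{(\bm\gamma,\bm w)\top}\bm\alpha^{*(\bm w)}$ is forced into the zero-subgradient set ${\cal Z}[\sigma^*_j]$ by the hypothesis, because then every element of $\partial\sigma^*_j$ at that point is $0$, which pins $\beta^{*(\bm w)}_j=0$.

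First I would invoke Lemma \ref{lem:gap-sphere-dual} to get $\bm\alpha^{*(\bm w)}\in{\cal A}^{*(\bm w)}$, i.e.\ $\|\bm\alpha^{*(\bm w)}-\hat{\bm\alpha}\|_2\leq \bar r(\bm w,\bm\gamma,\mu,\hat{\bm\beta},\hat{\bm\alpha})$. Next I would split
\begin{equation*}
\check{\check{X}}_{:j}^{(\bm\gamma,\bm w)\top}\bm\alpha^{*(\bm w)}
= \check{\check{X}}_{:j}^{(\bm\gamma,\bm w)\top}\hat{\bm\alpha}
+ \check{\check{X}}_{:j}^{(\bm\gamma,\bm w)\top}(\bm\alpha^{*(\bm w)}-\hat{\bm\alpha}),
\end{equation*}
and apply the Cauchy--Schwarz inequality to the second summand. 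This yields
\begin{equation*}
\bigl|\check{\check{X}}_{:j}^{(\bm\gamma,\bm w)\top}\bm\alpha^{*(\bm w)}-\check{\check{X}}_{:j}^{(\bm\gamma,\bm w)\top}\hat{\bm\alpha}\bigr|
\leq \|\check{\check{X}}_{:j}^{(\bm\gamma,\bm w)}\|_2\,\bar r(\bm w,\bm\gamma,\mu,\hat{\bm\beta},\hat{\bm\alpha}),
\end{equation*}
so $\check{\check{X}}_{:j}^{(\bm\gamma,\bm w)\top}\bm\alpha^{*(\bm w)}$ lies in the very interval appearing in the statement. This is the same two-step pattern already used for SsS: localize the unknown optimum in an $\ell_2$-ball, then propagate the ball through a linear functional via Cauchy--Schwarz.

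Combining the two steps, the hypothesis that this interval is a subset of ${\cal Z}[\sigma^*_j]$ forces $\partial\sigma^*_j(\check{\check{X}}_{:j}^{(\bm\gamma,\bm w)\top}\bm\alpha^{*(\bm w)})=\{0\}$, and then \eqref{eq:KKT-dual2primal}, in its decomposed form, gives $\beta^{*(\bm w)}_j\in\{0\}$, finishing the proof. I do not expect a genuine obstacle here: the only subtle point is matching the definition of ${\cal Z}[\cdot]$ from Table \ref{tab:definitions} (a singleton $\{0\}$ for the subgradient, not just $0\in\partial\sigma^*_j$), which is why the hypothesis demands the entire interval lie inside ${\cal Z}[\sigma^*_j]$ rather than merely containing a zero of $\partial\sigma^*_j$. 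Everything else is a direct dual-side analogue of the argument for Lemma \ref{lem:gap-sample-screening} already carried out in Appendix \ref{app:gap-sample-screening}.
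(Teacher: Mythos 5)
Your proposal is correct and follows essentially the same route as the paper: the paper proves this lemma "almost the same as Lemma \ref{lem:gap-sample-screening}", i.e.\ by localizing the unknown optimum in the gap ball of Lemma \ref{lem:gap-sphere-dual}, propagating it through the linear functional $\check{\check{X}}_{:j}^{(\bm\gamma,\bm w)\top}(\cdot)$ (its Lemma \ref{lem:optimize-linear} is exactly your Cauchy--Schwarz step), and then using the decomposed KKT relation $\beta^{*(\bm w)}_j \in \partial\sigma^*_j(\check{\check{X}}_{:j}^{(\bm\gamma,\bm w)\top}\bm\alpha^{*(\bm w)})$ together with the singleton definition of ${\cal Z}[\sigma^*_j]$. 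The only cosmetic difference is that the paper phrases the argument via the union of subgradients over the whole ball, whereas you bound the single point $\check{\check{X}}_{:j}^{(\bm\gamma,\bm w)\top}\bm\alpha^{*(\bm w)}$ directly, which suffices for the stated sufficient condition.
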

The proof is almost same as Lemma \ref{lem:gap-sample-screening}.

\subsection{Application to Distributionally Robust Setup} \label{sec:DRSS}

In Sections \ref{sec:safe-sample-screening} and \ref{sec:safe-feature-screening} we showed the conditions when samples or features are screened out.
In this section we show how to use the conditions for the change of sample weights $\bm w$.

\begin{definition}[weight-changing safe screening (WCSS)] \label{def:safe-screening-specific-change}
Given $X\in\mathbb{R}^{n\times d}$, $\bm y\in\mathbb{R}^n$,
$\tilde{\bm w}\in\mathbb{R}_{\geq 0}^n$ and $\bm w\in\mathbb{R}_{\geq 0}^n$,
suppose that $\bm\beta^{*(\tilde{\bm w})}$ in Definition \ref{def:WRERM} (and also $\bm\alpha^{*(\tilde{\bm w})}$) are already computed, but $\bm\beta^{*(\bm w)}$ not.
Then \emph{WCSsS (resp. WCSfS) from $\tilde{\bm w}$ to $\bm w$} is defined as finding $i\in[n]$ satisfying Lemma \ref{lem:gap-sample-screening} (resp. $j\in[d-1]$ satisfying Lemma \ref{lem:gap-feature-screening}).
\end{definition}

\begin{definition}[Distributionally robust safe screening (DRSS)] \label{def:safe-screening-robust-change}
Given $X\in\mathbb{R}^{n\times d}$, $\bm y\in\mathbb{R}^n$,
$\tilde{\bm w}\in\mathbb{R}_{\geq 0}^n$ and ${\cal W}\subset\mathbb{R}_{\geq 0}^n$,
suppose that $\bm\beta^{*(\tilde{\bm w})}$ in Definition \ref{def:WRERM} (and also $\bm\alpha^{*(\tilde{\bm w})}$) are already computed.
Then the \emph{DRSsS (resp. DRSfS) for ${\cal W}$} is defined as finding $i\in[n]$ satisfying Lemma \ref{lem:gap-sample-screening} (resp. $j\in[d-1]$ satisfying Lemma \ref{lem:gap-feature-screening}) for any $\bm w\in{\cal W}$.
\end{definition}

For Definition \ref{def:safe-screening-specific-change}, we have only to apply SS rules in
Lemma \ref{lem:gap-sample-screening} or \ref{lem:gap-feature-screening} by setting
$\hat{\bm\beta}\gets\bm\beta^{*(\tilde{\bm w})}$ and $\hat{\bm\alpha}\gets\bm\alpha^{*(\tilde{\bm w})}$.
On the other hand, for Definition \ref{def:safe-screening-robust-change}, we need to maximize or minimize
the interval in Lemma \ref{lem:gap-sample-screening} or \ref{lem:gap-feature-screening} in $\bm w\in{\cal W}$.
\begin{theorem} \label{thm:safe-screening-robust}
The DRSsS rule for ${\cal W}$ is calculated as:
\begin{align*}
& [\check{X}_{i:}\bm\beta^{*(\tilde{\bm w})} - \|\check{X}_{i:}\|_2 R,
	\check{X}_{i:}\bm\beta^{*(\tilde{\bm w})} + \|\check{X}_{i:}\|_2 R] \subseteq {\cal Z}[\ell_{y_i}],
\end{align*}
where
$R := \max_{\bm w\in{\cal W}} r(\bm w, \bm\gamma, \kappa, \bm\beta^{*(\tilde{\bm w})}, \bm\alpha^{*(\tilde{\bm w})})$.

Similarly, the DRSfS rule for ${\cal W}$ is calculated as:
\begin{align*}
& [ \underline{L} - N \overline{R},
	\overline{L} + N \overline{R}] \subseteq {\cal Z}[\sigma^*_j],
	\quad\text{where} \\
& \underline{L} := \min_{\bm w\in{\cal W}}\check{\check{X}}_{:j}^{(\bm\gamma,\bm w)\top} \bm\alpha^{*(\tilde{\bm w})}
	= \min_{\bm w\in{\cal W}} (\bm\gamma \otimes \check{X}_{:j} \otimes \bm\alpha^{*(\tilde{\bm w})})^\top \bm w, \\
& \overline{L} := \max_{\bm w\in{\cal W}}\check{\check{X}}_{:j}^{(\bm\gamma,\bm w)\top} \bm\alpha^{*(\tilde{\bm w})}
	= \max_{\bm w\in{\cal W}} (\bm\gamma \otimes \check{X}_{:j} \otimes \bm\alpha^{*(\tilde{\bm w})})^\top \bm w, \\
& N := \max_{\bm w\in{\cal W}}\|\check{\check{X}}_{:j}^{(\bm\gamma,\bm w)}\|_2
	= \sqrt{\max_{\bm w\in{\cal W}} \| \bm w\otimes\bm\gamma\otimes\check{X}_{:j} \|_2^2 }, \\
& \overline{R} := \max_{\bm w\in{\cal W}}\bar{r}(\bm w, \bm\gamma, \mu, \bm\beta^{*(\tilde{\bm w})}, \bm\alpha^{*(\tilde{\bm w})}).
\end{align*}
\end{theorem}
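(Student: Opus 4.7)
The strategy is to invoke the non-DR gap safe screening rules (Lemmas \ref{lem:gap-sample-screening} and \ref{lem:gap-feature-screening}) with the already-computed primal-dual pair $(\hat{\bm\beta}, \hat{\bm\alpha}) = (\bm\beta^{*(\tilde{\bm w})}, \bm\alpha^{*(\tilde{\bm w})})$, and then to dilate the $\bm w$-dependent screening interval into a single $\bm w$-independent outer interval whose inclusion in ${\cal Z}[\ell_{y_i}]$ or ${\cal Z}[\sigma^*_j]$ enforces safety simultaneously for every $\bm w \in {\cal W}$, as required by Definition \ref{def:safe-screening-robust-change}.

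For the DRSsS part, once the primal-dual pair is fixed, only the half-length $\|\check{X}_{i:}\|_2 r(\bm w, \bm\gamma, \kappa, \bm\beta^{*(\tilde{\bm w})}, \bm\alpha^{*(\tilde{\bm w})})$ in the Lemma \ref{lem:gap-sample-screening} interval depends on $\bm w$; the center $\check{X}_{i:}\bm\beta^{*(\tilde{\bm w})}$ does not. Hence the family of intervals indexed by $\bm w \in {\cal W}$ is nested around a common center, and its union is exactly the interval obtained by substituting the worst-case radius $R = \max_{\bm w \in {\cal W}} r(\bm w, \bm\gamma, \kappa, \bm\beta^{*(\tilde{\bm w})}, \bm\alpha^{*(\tilde{\bm w})})$. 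Inclusion of this outer interval in ${\cal Z}[\ell_{y_i}]$ is therefore equivalent to inclusion of each individual interval, which by Lemma \ref{lem:gap-sample-screening} delivers the DRSsS certificate.

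For the DRSfS part the center $c(\bm w) := \check{\check{X}}_{:j}^{(\bm\gamma,\bm w)\top}\bm\alpha^{*(\tilde{\bm w})}$, the coefficient $n(\bm w) := \|\check{\check{X}}_{:j}^{(\bm\gamma,\bm w)}\|_2$, and the radius $\bar r(\bm w, \ldots)$ in Lemma \ref{lem:gap-feature-screening} all depend on $\bm w$, so a genuine outer enclosure is needed. From $\underline L \leq c(\bm w) \leq \overline L$ and $0 \leq n(\bm w)\bar r(\bm w, \ldots) \leq N \overline R$ one obtains that every $\bm w$-specific interval is contained in $[\underline L - N\overline R,\, \overline L + N\overline R]$. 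The explicit formulas for $\underline L$, $\overline L$ are immediate from the linearity $c(\bm w) = (\bm\gamma \otimes \check X_{:j} \otimes \bm\alpha^{*(\tilde{\bm w})})^\top \bm w$, and that for $N$ follows from rewriting $\|\bm w \otimes \bm\gamma \otimes \check X_{:j}\|_2^2$.

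The only non-trivial, though still elementary, step is the decoupling $\max_{\bm w \in {\cal W}} n(\bm w)\bar r(\bm w, \ldots) \leq N\overline R$, which replaces a joint maximisation by a product of separable ones; this relaxation is what keeps the inner optimisation over ${\cal W}$ tractable at the price of a conservative — but still provably safe — screening rule. Combining it with transitivity of containment and Lemma \ref{lem:gap-feature-screening} closes the DRSfS part.
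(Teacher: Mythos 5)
Your proposal is correct and follows essentially the same route the paper takes (the paper gives no separate proof, treating the theorem as the direct consequence of applying Lemmas \ref{lem:gap-sample-screening} and \ref{lem:gap-feature-screening} with $(\hat{\bm\beta},\hat{\bm\alpha})=(\bm\beta^{*(\tilde{\bm w})},\bm\alpha^{*(\tilde{\bm w})})$ and then maximizing/minimizing the interval endpoints over $\bm w\in{\cal W}$). Your explicit remarks that the DRSsS intervals share a $\bm w$-independent center while the DRSfS enclosure via $\underline{L},\overline{L},N\overline{R}$ is a conservative but safe relaxation are consistent with, and a useful elaboration of, the paper's intended argument.
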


Thus, solving the maximizations and/or minimizations in Theorem \ref{thm:safe-screening-robust} provides DRSsS and DRSfS rules.
However, how to solve it largely depends on the choice of $\ell$, $\rho$ and ${\cal W}$.
In Section \ref{sec:DRSS-examples} we show specific calculations of Theorem \ref{thm:safe-screening-robust}
for some typical setups.

\section{DRSS for Typical ML Setups} \label{sec:DRSS-examples}

In this section we show DRSS rules derived in Section \ref{sec:DRSS}
for two typical ML setups:
DRSsS for L1-loss L2-regularized SVM (Section \ref{sec:l1loss-l2reg-svm}) and
DRSfS for L2-loss L1-regularized SVM (Section \ref{sec:l2loss-l1reg-svm})
under ${\cal W} := \{ \bm w \mid \|\bm w - \tilde{\bm w}\|_2\leq S \}$.

In the processes, we need to solve constrained maximizations of convex functions.
Although maximizations of convex functions are not easy in general (minimizations are easy),
we show that the maximizations need in the processes can be algorithmically solved
in Section \ref{sec:maximize-convex-quadratic}.

\subsection{DRSsS for L1-loss L2-regularized SVM} \label{sec:l1loss-l2reg-svm}

L1-loss L2-regularized SVM is a sample-sparse model for binary classification ($\bm y\in\{-1,+1\}^n$)
that satisfies the preconditions to apply SsS (Lemma \ref{lem:gap-sphere-primal}).
Detailed calculations are presented in Appendix \ref{app:details-l1loss-l2reg-svm}.

For L1-loss L2-regularized SVM, we set $\rho$ and $\ell$ as:
\begin{align*}
& \rho(\bm\beta) := \frac{\lambda}{2}\|\bm\beta\|_2^2 \quad (\lambda > 0:~\text{hyperparameter}), \\
& \ell_y(t) := \max\{0, 1 - t\} \quad(\text{where}~y\in\{-1, +1\}).
\end{align*}
Then $\rho$ is $\lambda$-strongly convex.
Setting $\bm\gamma = \bm 1_n$, the dual objective function is described as
\begin{align}
& D_{\bm w}(\bm\alpha) = \nonumber\\
& \left\{ \begin{array}{r}
	\sum_{i=1}^n w_i \alpha_i - \frac{1}{2\lambda} \bm\alpha^\top (\bm w\sqtimes\check{X})(\bm w\sqtimes\check{X})^\top \bm\alpha,\\
		(\forall i\in[n]: 0 \leq \alpha_i \leq 1) \\
	-\infty. \hfill (\text{otherwise})
	\end{array} \right.
	\label{eq:dual-l1loss-l2reg-svm}
\end{align}
Here, in the viewpoint of minimization, we may consider this problem as a maximization with the constraint ``$\forall i\in[n]: 0 \leq \alpha_i \leq 1$''.

Optimality conditions \eqref{eq:KKT-dual2primal} and \eqref{eq:KKT-primal2dual} are described as:
\begin{align}
& \bm\beta^{*(\bm w)} = \frac{1}{\lambda}(\bm w\sqtimes\check{X})^\top \bm\alpha^{*(\bm w)},
	\label{eq:KKT-dual2primal-l1loss-l2reg-svm}\\
& \forall i\in[n]:\quad \alpha^{*(\bm w)}_i \in \begin{cases}
	\{ 1 \}, & (\check{X}_{i:}\bm\beta^{*(\bm w)} \leq 1) \\
	[0, 1], & (\check{X}_{i:}\bm\beta^{*(\bm w)} = 1) \\
	\{ 0 \}. & (\check{X}_{i:}\bm\beta^{*(\bm w)} \geq 1)
	\end{cases}
	\label{eq:KKT-primal2dual-l1loss-l2reg-svm}
\end{align}

Noticing that ${\cal Z}[\ell_{y_i}] = (1, +\infty)$,
by Theorem \ref{thm:safe-screening-robust},
the DRSsS rule for ${\cal W}$
is calculated as:
\begin{align}
& \check{X}_{i:}\bm\beta^{*(\tilde{\bm w})} - \|\check{X}_{i:}\|_2 \max_{\bm w\in{\cal W}} r(\bm w, \bm\gamma, \kappa, \bm\beta^{*(\tilde{\bm w})}, \bm\alpha^{*(\tilde{\bm w})}) > 1, \nonumber\\
& \quad\text{where} \label{eq:safe-sample-screening-l1loss-l2reg-svm} \\
& r(\bm w, \bm\gamma, \kappa, \bm\beta^{*(\tilde{\bm w})}, \bm\alpha^{*(\tilde{\bm w})}) \nonumber\\
& := \sqrt{\frac{2}{\kappa}[P_{\bm w}(\bm\beta^{*(\tilde{\bm w})}) - D_{\bm w}(\bm\alpha^{*(\tilde{\bm w})})]}, \nonumber \\
& P_{\bm w}(\bm\beta^{*(\tilde{\bm w})}) - D_{\bm w}(\bm\alpha^{*(\tilde{\bm w})}) \nonumber\\
& := \sum_{i=1}^n w_i [ \ell_{y_i}(\check{X}_{i:}\bm\beta^{*(\tilde{\bm w})}) - \alpha^{*(\tilde{\bm w})}_i] + \lambda\|\bm\beta^{*(\tilde{\bm w})}\|_2^2 \nonumber\\
& \phantom{:=} +\frac{1}{2\lambda} \bm w^\top (\bm\alpha^{*(\tilde{\bm w})}\sqtimes\check{X}) (\bm\alpha^{*(\tilde{\bm w})}\sqtimes\check{X})^\top \bm w . \nonumber
\end{align}
Here, we can find that $P_{\bm w}(\bm\beta^{*(\tilde{\bm w})}) - D_{\bm w}(\bm\alpha^{*(\tilde{\bm w})})$, which we need to maximize in reality, is the sum of linear function and convex quadratic function with respect to $\bm w\in{\cal W}$. (Since $(\bm\alpha^{*(\tilde{\bm w})}\sqtimes\check{X}) (\bm\alpha^{*(\tilde{\bm w})}\sqtimes\check{X})^\top$ is positive semidefinite, we know that it is convex quadratic).
Although constrained maximization of a convex function is difficult in general,
for this case we can algorithmically maximize it (Section \ref{sec:maximize-convex-quadratic}).

\subsection{DRSfS for L2-loss L1-regularized SVM} \label{sec:l2loss-l1reg-svm}

L2-loss L1-regularized SVM is a feature-sparse model for binary classification ($\bm y\in\{-1,+1\}^n$)
that satisfies the preconditions to apply SfS (Lemma \ref{lem:gap-sphere-dual}).
Detailed calculations are presented in Appendix \ref{app:details-l2loss-l1reg-svm}.

For L2-loss L1-regularized SVM, we set $\sigma_j$ (and consequently $\rho$) and $\ell$ as:
\begin{align*}
& \forall j\in[d-1]:~\sigma_j(\beta_j) := \lambda |\beta_j| \quad (\lambda > 0:~\text{hyperparameter}), \\
& \sigma_d(\beta_d) := 0, \\
& \ell_y(t) := (\max\{0, 1 - t\})^2 \quad(\text{where}~y\in\{-1, +1\}).
\end{align*}
Notice that $\sigma_d(\beta_d)$ is not defined as $\lambda|\beta_d|$ but $0$: we rarely regularize the intercept with L1-regularization.

Setting $\bm\gamma = \lambda \bm 1_n$,
the dual objective function is described as
\begin{align}
& D_{\bm w}(\bm\alpha) = \begin{cases}
	-\lambda\sum_{i=1}^n w_i \frac{\lambda\alpha^2_i - 4\alpha_i}{4}, & (\text{\eqref{eq:l2loss-l1reg-constraint-1}--\eqref{eq:l2loss-l1reg-constraint-3}~are~met}) \\
	-\infty, & (\text{otherwise})
	\end{cases}
	\label{eq:dual-l2loss-l1reg-svm} \\
& \text{where}\quad
	\alpha_i \geq 0, \label{eq:l2loss-l1reg-constraint-1}\\
& \phantom{\text{where}}\quad
	\forall j\in[d-1]:~| (\bm w\otimes\check{X}_{:j})^\top \bm\alpha | \leq 1, \label{eq:l2loss-l1reg-constraint-2}\\
& \phantom{\text{where}}\quad
	(\bm w\otimes\check{X}_{:d})^\top \bm\alpha = (\bm w\otimes\bm y)^\top \bm\alpha = 0. \label{eq:l2loss-l1reg-constraint-3}
\end{align}
Optimality conditions \eqref{eq:KKT-dual2primal} and \eqref{eq:KKT-primal2dual} are described as
\begin{align}
& \forall j\in[d-1]:~| (\bm w\otimes\check{X}_{:j})^\top \bm\alpha^{*(\bm w)} | < 1 \Rightarrow \beta^{*(\bm w)}_j = 0,
	\label{eq:KKT-dual2primal-l2loss-l1reg-svm}\\
& \forall i\in[n]:\quad \alpha^{*(\bm w)}_i = \frac{2}{\lambda}\max\{0, 1 - \check{X}_{i:}\bm\beta^{*(\bm w)}\}.
	\label{eq:KKT-primal2dual-l2loss-l1reg-svm}
\end{align}

Noticing that ${\cal Z}[\sigma_j^*] = (-\lambda, \lambda)$,
by Theorem \ref{thm:safe-screening-robust},
the DRSfS rule for ${\cal W}$
is calculated as:
\begin{align*}
& \underline{L} - N \overline{R} > -\lambda,
	\quad
	\overline{L} + N \overline{R} < \lambda,
\end{align*}
where
\begin{align*}
& \underline{L} := \lambda \min_{\bm w\in{\cal W}} (\check{X}_{:j} \otimes \bm\alpha^{*(\tilde{\bm w})})^\top \bm w, \\
& \overline{L} := \lambda \max_{\bm w\in{\cal W}} (\check{X}_{:j} \otimes \bm\alpha^{*(\tilde{\bm w})})^\top \bm w, \\
& N := \lambda \sqrt{\max_{\bm w\in{\cal W}} \| \bm w\otimes\check{X}_{:j}\|_2^2 } \\
	& \phantom{N}
	= \lambda \sqrt{\max_{\bm w\in{\cal W}}\{ \bm w^\top \mathrm{diag}(\check{X}_{:j}\otimes\check{X}_{:j}) \bm w \}}, \\
& \overline{R} := \max_{\bm w\in{\cal W}}\bar{r}(\bm w, \bm\gamma, \mu, \bm\beta^{*(\tilde{\bm w})}, \bm\alpha^{*(\tilde{\bm w})}), \\
& \bar{r}(\bm w, \bm\gamma, \mu, \bm\beta^{*(\tilde{\bm w})}, \bm\alpha^{*(\tilde{\bm w})}) \\
	& := \sqrt{\frac{2 \mu}{\min_{i\in[n]} w_i \gamma_i^2}[P_{\bm w}(\bm\beta^{*(\tilde{\bm w})}) - D_{\bm w}(\bm\alpha^{*(\tilde{\bm w})})]},
\end{align*}
\begin{align*}
& P_{\bm w}(\bm\beta^{*(\tilde{\bm w})}) - D_{\bm w}(\bm\alpha^{*(\tilde{\bm w})}) \\
	& = \sum_{i=1}^n w_i \left[ \ell_{y_i}(\check{X}_{i:}\bm\beta^{*(\tilde{\bm w})}) + \lambda \frac{\lambda(\alpha^{*(\tilde{\bm w})})^2_i - 4\alpha^{*(\tilde{\bm w})}_i}{4} \right] \\
	& \phantom{=} + \rho(\bm\beta^{*(\tilde{\bm w})}).
\end{align*}
Here, the expressions in $\underline{L}$ and $\overline{L}$ are linear with respect to $\bm w$, and the expression in $N$ inside the square root is convex and quadratic with respect to $\bm w$.
Also, $\overline{R}$ 
is decomposed to two maximizations $\frac{2 \mu}{\min_{i\in[n]} w_i \gamma_i^2}$ and $P_{\bm w}(\bm\beta^{*(\bm w)}) - D_{\bm w}(\bm\alpha^{*(\bm w)})$, where the former is easily computed while the latter is linear with respect to $\bm w$.
So, similar to L1-loss L2-regularized SVM, we can obtain the maximization result 
by maximizing or minimizing the linear terms by Lemma \ref{lem:optimize-linear} in Appendix \ref{app:proofs},
and maximizing the convex quadratic function by the method of Section \ref{sec:maximize-convex-quadratic}.

\subsection{Maximizing Linear and Convex Quadratic Functions in Hyperball Constraint} \label{sec:maximize-convex-quadratic}

To derive DRSS rules of Sections \ref{sec:l1loss-l2reg-svm} and \ref{sec:l2loss-l1reg-svm}, we need to compute the following forms of optimization problems:
\begin{align}
& \max_{\bm w\in{\cal W}} \bm w^\top A \bm w + 2\bm b^\top\bm w, \label{eq:maximize-convex-quadratic} \\
& \text{where}
	\quad {\cal W} := \{ \bm w\in\mathbb{R}^n \mid \|\bm w - \tilde{\bm w}\|_2\leq S \}, \nonumber\\
& \phantom{\text{where}}
	\quad \tilde{\bm w}\in\mathbb{R}^n,
	\quad \bm b\in\mathbb{R}^n, \nonumber\\
& \phantom{\text{where}}
	\quad A\in\mathbb{R}^{n\times n}:~\text{symmetric, positive semidefinite,} \nonumber \\
& \phantom{\text{where} \quad A\in\mathbb{R}^{n\times n}:~}
	\text{nonzero}. \nonumber
\end{align}

\begin{lemma} \label{lem:maximize-convex-quadratic}
The maximization \eqref{eq:maximize-convex-quadratic} is achieved by the following procedure.
First, we define 
$Q\in\mathbb{R}^{n\times n}$ and $\Phi := \mathrm{diag}(\phi_1, \phi_2, \dots, \phi_n)$
as the eigendecomposition of $A$ such that $A = Q^\top\Phi Q$,
$Q$ is orthogonal ($QQ^\top = Q^\top Q = I$).
Also, let $\bm\xi := -\Phi Q\tilde{\bm w} - Q\bm b \in\mathbb{R}^n$, and 
\begin{align}
& {\cal T}(\nu) = \sum_{i=1}^n \left(\frac{\xi_i}{\nu - \phi_i}\right)^2. \label{eq:lagrangian-result-nu}
\end{align}
Then, the maximization \eqref{eq:maximize-convex-quadratic} is equal to the largest value among them:
\begin{itemize}
\item For each $\nu$ such that
	${\cal T}(\nu) = S^2$ (see Lemma \ref{lem:find-invsq}),
	the value $\nu S^2 + (\nu\tilde{\bm w} + \bm b)^\top Q^\top(\Phi - \nu I)^{-1}\bm\xi + \bm b^\top\tilde{\bm w}$, and
\item For each $\nu\in\{\phi_1, \phi_2, \dots, \phi_n\}$ (duplication removed)
	such that ``$\forall i\in[n]:~\phi_i = \nu \Rightarrow \xi_i = 0$'',
	the value
	\begin{align*}
	& \max_{\bm\tau\in\mathbb{R}^n} [\nu S^2 + (\nu\tilde{\bm w} + \bm b)^\top Q^\top \bm\tau + \bm b^\top\tilde{\bm w}],\\
	& \text{subject to}
		\quad\forall i\in{\cal F}_\nu:\quad \tau_i = \frac{\xi_i}{\phi_i - \nu}, \nonumber\\
	& \phantom{\text{subject to}}
		\quad \sum_{i\in{\cal U}_\nu} \tau_i^2 = S^2 - \sum_{i\in{\cal F}_\nu} \tau_i^2, \nonumber \\
	& \text{where}
		\quad {\cal U}_\nu := \{ i \mid i\in[n],~\phi_i = \nu \},
		\quad {\cal F}_\nu := [n]\setminus{\cal U}_\nu.
	\end{align*}
	(Note that the maximization is easily computed by Lemma \ref{lem:optimize-linear}.)
\end{itemize}

\end{lemma}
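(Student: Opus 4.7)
The plan is to treat the problem as a non-convex maximization of a convex quadratic on a Euclidean ball, and to characterize every candidate optimum via Lagrangian stationarity, then evaluate the objective at each candidate and take the largest value. First I would observe that $\bm w \mapsto \bm w^\top A\bm w + 2\bm b^\top\bm w$ is convex (since $A$ is PSD) and ${\cal W}$ is compact and convex, so the maximum is attained on the boundary sphere $\|\bm w - \tilde{\bm w}\|_2 = S$. Substituting $\bm u := \bm w - \tilde{\bm w}$ turns the problem into maximizing $\bm u^\top A\bm u + 2(A\tilde{\bm w}+\bm b)^\top \bm u + \mathrm{const}$ over $\|\bm u\|_2 = S$.

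Next I would form the Lagrangian $L(\bm u,\nu) = \bm u^\top A\bm u + 2(A\tilde{\bm w}+\bm b)^\top\bm u - \nu(\|\bm u\|_2^2 - S^2)$ and set $\nabla_{\bm u}L = 0$, obtaining the stationarity equation $(A-\nu I)\bm u = -(A\tilde{\bm w}+\bm b)$. Applying the eigendecomposition $A = Q^\top\Phi Q$ and letting $\bm\tau := Q\bm u$, this becomes the diagonal system $(\phi_i - \nu)\tau_i = \xi_i$ for $i \in [n]$ (using that $\bm\xi = -Q(A\tilde{\bm w}+\bm b)$, which follows from $QA = \Phi Q$). The sphere constraint reads $\|\bm\tau\|_2^2 = S^2$. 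The analysis then splits naturally into two cases depending on whether $\nu$ coincides with some eigenvalue of $A$.

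In the first case, $\nu \notin \{\phi_1,\dots,\phi_n\}$, so $\tau_i = \xi_i/(\phi_i - \nu)$ is uniquely determined; plugging into $\|\bm\tau\|_2^2 = S^2$ yields exactly ${\cal T}(\nu) = S^2$, which is a one-dimensional equation solvable by Lemma \ref{lem:find-invsq}. In the second case, $\nu = \phi_k$ for some index $k$: the diagonal system is solvable only when $\xi_i = 0$ for every $i \in {\cal U}_\nu$ (this is the compatibility condition stated in the lemma), and then the components $\tau_i$ with $i \in {\cal F}_\nu$ are determined as $\xi_i/(\phi_i-\nu)$ while the components $\tau_i$ with $i \in {\cal U}_\nu$ are free subject only to $\sum_{{\cal U}_\nu}\tau_i^2 = S^2 - \sum_{{\cal F}_\nu}\tau_i^2$; this residual freedom is exactly what produces the inner sub-maximization in the lemma, and since the objective is linear in those free $\tau_i$ it is solved by Lemma \ref{lem:optimize-linear}.

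The last step is to evaluate the objective at a stationary point and show it equals the stated expression. Taking the inner product of the stationarity identity $A\bm w + \bm b = \nu(\bm w - \tilde{\bm w})$ with $\bm w - \tilde{\bm w}$ and rearranging gives $\bm w^\top A\bm w + 2\bm b^\top\bm w = \nu S^2 + (A\tilde{\bm w}+\bm b)^\top\bm w + \bm b^\top\tilde{\bm w}$, and using $A\tilde{\bm w}+\bm b = -Q^\top\bm\xi$ together with the stationarity identity again converts $(A\tilde{\bm w}+\bm b)^\top\bm w$ into $(\nu\tilde{\bm w}+\bm b)^\top Q^\top(\Phi-\nu I)^{-1}\bm\xi$ in the non-degenerate case (since then $\bm u = Q^\top(\Phi-\nu I)^{-1}\bm\xi$). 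The main obstacle I anticipate is the degenerate second case: one must carefully justify that the free coordinates really do only contribute via a linear sub-problem (the quadratic part cancels through the stationarity equation), and that the compatibility condition $\xi_i = 0$ on ${\cal U}_\nu$ is not only necessary but also exhausts all KKT candidates with $\nu$ an eigenvalue. Once these two cases together enumerate every boundary stationary point, taking the maximum over all of them yields the value claimed.
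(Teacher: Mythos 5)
Your proposal is correct and follows essentially the same route as the paper's proof: restrict to the sphere by convexity, derive the Lagrangian stationarity system $(A-\nu I)(\bm w - \tilde{\bm w}) = -(A\tilde{\bm w}+\bm b)$, diagonalize via $A = Q^\top \Phi Q$ to get $(\phi_i-\nu)\tau_i=\xi_i$ with $\|\bm\tau\|_2=S$, split into the nonsingular case (${\cal T}(\nu)=S^2$) and the singular case (compatibility $\xi_i=0$ on ${\cal U}_\nu$, linear sub-maximization via Lemma \ref{lem:optimize-linear}), and evaluate candidates with the identity $\bm w^\top A\bm w+2\bm b^\top\bm w=\nu S^2+(\nu\tilde{\bm w}+\bm b)^\top(\bm w-\tilde{\bm w})+\bm b^\top\tilde{\bm w}$. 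The points you flag as obstacles (linearity of the residual sub-problem and exhaustiveness of the eigenvalue case) are resolved exactly as you anticipate, so no genuine gap remains.
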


The proof is presented in Appendix \ref{app:maximize-convex-quadratic}.

\begin{lemma} \label{lem:find-invsq}
Under the same definitions as Lemma \ref{lem:maximize-convex-quadratic},
The equation ${\cal T}(\nu) = S^2$ can be solved by the following procedure:
Let $\bm e := [e_1, e_2, \dots, e_N]$ ($N\leq n$, $k\neq k^\prime \Rightarrow e_k\neq e_{k^\prime}$) be a sequence of indices such that
\begin{enumerate}
\item $e_k\in[n]$ for any $k\in[N]$,
\item $i\in[n]$ is included in $\bm e$ if and only if $\xi_i\neq 0$, and
\item $\phi_{e_1} \leq \phi_{e_2} \leq \dots \leq \phi_{e_N}$.
\end{enumerate}
Note that, if $\phi_{e_k} < \phi_{e_{k+1}}$ ($k\in[N-1]$), then ${\cal T}(\nu)$ is a convex function
in the interval $(\phi_{e_k}, \phi_{e_{k+1}})$ with $\lim_{\nu\to\phi_{e_k}+0} = \lim_{\nu\to\phi_{e_{k+1}}-0} = +\infty$.
Then, unless $N = 0$, each of the following intervals contains just one solution of ${\cal T}(\nu) = S^2$:
\begin{itemize}
\item Intervals $(-\infty, \phi_{e_1})$ and $(\phi_{e_N}, +\infty)$.
\item Let $\nu^{\#(k)} := \targmin_{\phi_{e_k}<\nu<\phi_{e_{k+1}}} {\cal T}(\nu)$.
	For each $k\in[N-1]$ such that $\phi_{e_k}<\phi_{e_{k+1}}$,
	\begin{itemize}
	\item intervals $(\phi_{e_k}, \nu^{\#(k)})$ and $(\nu^{\#(k)}, \phi_{e_{k+1}})$ if ${\cal T}(\nu^{\#(k)}) < S^2$,
	\item interval $[\nu^{\#(k)}, \nu^{\#(k)}]$ (i.e., point) if ${\cal T}(\nu^{\#(k)}) = S^2$.
	\end{itemize}
\end{itemize}
It follows that ${\cal T}(\nu) = S^2$ has at most $2n$ solutions.
\end{lemma}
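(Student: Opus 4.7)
The plan is to exploit the strict convexity of each summand of $\mathcal{T}$ away from its pole, and then count solutions of $\mathcal{T}(\nu)=S^2$ interval by interval between poles.

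First I would reduce to indices with $\xi_i\neq 0$: terms with $\xi_i=0$ vanish identically and contribute no poles, so the pole set of $\mathcal{T}$ is precisely the set of distinct values among $\phi_{e_1},\ldots,\phi_{e_N}$, which partitions $\mathbb{R}$ into the two outer intervals $(-\infty,\phi_{e_1})$ and $(\phi_{e_N},+\infty)$ together with the nondegenerate inner intervals $(\phi_{e_k},\phi_{e_{k+1}})$ for each $k$ with $\phi_{e_k}<\phi_{e_{k+1}}$. Next I would establish strict convexity of each pole-free term: since $\frac{d^2}{d\nu^2}\bigl[\xi^2(\nu-\phi)^{-2}\bigr]=6\xi^2(\nu-\phi)^{-4}>0$ whenever $\xi\neq 0$, the sum $\mathcal{T}$ is strictly convex on every interval avoiding its poles, which already proves the convexity assertion in the lemma.

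For the outer intervals I would additionally check that each nonzero summand is strictly monotone, so $\mathcal{T}$ is strictly increasing on $(-\infty,\phi_{e_1})$ and strictly decreasing on $(\phi_{e_N},+\infty)$, with limits $0$ at $\pm\infty$ and $+\infty$ at the adjacent pole. The intermediate value theorem then gives exactly one root of $\mathcal{T}(\nu)=S^2$ in each outer interval for any $S^2>0$. For an inner interval $(\phi_{e_k},\phi_{e_{k+1}})$, the summands indexed by $e_k$ and $e_{k+1}$ alone force $\mathcal{T}(\nu)\to+\infty$ at both endpoints, hence by strict convexity $\mathcal{T}$ attains a unique minimizer $\nu^{\#(k)}$, is strictly decreasing on $(\phi_{e_k},\nu^{\#(k)})$ and strictly increasing on $(\nu^{\#(k)},\phi_{e_{k+1}})$. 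Applying the intermediate value theorem to each monotone branch then yields the three-case dichotomy in the statement: two roots when $\mathcal{T}(\nu^{\#(k)})<S^2$, the single root $\nu^{\#(k)}$ when equality holds, and none otherwise.

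Summing across the at most $N-1$ nondegenerate inner intervals plus the two outer intervals gives at most $2N\leq 2n$ roots, matching the final claim. The only bookkeeping subtlety is the case in which several consecutive $\phi_{e_k}$ coincide so that the corresponding inner interval is degenerate: I would simply skip these, noting that coinciding indices merely aggregate into the coefficient of a single pole and do not open a new interval. The main obstacle I anticipate is making the divergence-to-$+\infty$ argument at each pole airtight; this requires noting that the coefficient of each pole is a sum of squares $\sum_{j:\phi_j=\phi_{e_k}}\xi_j^2$ which is strictly positive by construction of $\bm e$, so no cancellation among summands can occur, and once this is in hand the remaining arguments are elementary calculus on strictly convex functions.
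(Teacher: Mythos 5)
Your argument is correct and follows essentially the same route as the paper's proof: the paper also establishes convexity of ${\cal T}$ on each pole-free interval (via the derivative $-2\sum_i \xi_i^2/(\nu-\phi_i)^3$ being increasing, equivalent to your positive-second-derivative computation) together with the divergence $\lim_{\nu\to\phi_{e_k}\pm 0}{\cal T}(\nu)=+\infty$, and then lets the interval-by-interval root count follow. Your write-up simply makes explicit the monotone-branch/intermediate-value bookkeeping and the positivity of the pole coefficients, which the paper leaves as "the conclusion immediately follows."
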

By Lemma \ref{lem:find-invsq}, in order to compute the solution of ${\cal T}(\nu) = S^2$,
we have only to compute $\nu^{\#(k)}$ by Newton method or the like,
and to compute the solution for each interval by Newton method or the like.
We show an example of ${\cal T}(\nu)$ in Figure \ref{fig:sum_equals_R2},
and the proof in Appendix \ref{app:find-invsq}.

\begin{figure}[t]
\includegraphics[width=\hsize]{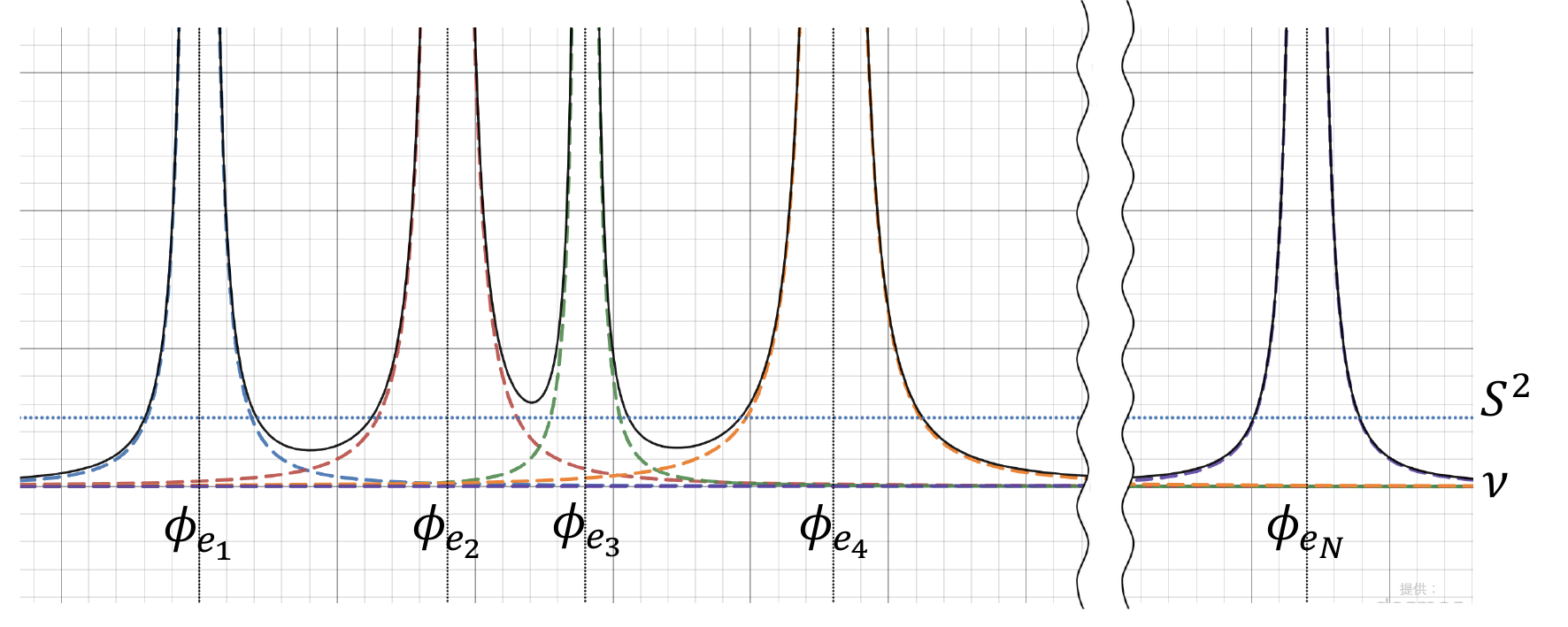}
\vspace{-2em}
\caption{An example of the expression ${\cal T}(\nu)$ (black solid line) in Lemmas \ref{lem:maximize-convex-quadratic} and \ref{lem:find-invsq}.
Colored dash lines denote terms in the summation $(\xi_{e_k}/(\nu - \phi_{e_k}))^2$.
We can see that, given an interval $(\phi_{e_k}, \phi_{e_{k+1}})$ ($k\in[N-1]$), the function is convex.}
\label{fig:sum_equals_R2}
\end{figure}

\section{Application to Deep Learning} \label{sec:deep-learning}

So far, our discussion of SS rules has primarily focused on ML models with linear predictions and convex loss and regularization functions.
However, there may be scenarios where we would like to employ more complex ML models, such as deep learning (DL).

For DL models, deriving SS rules for the entire model can be challenging due to the complexity of bounding the change in model parameters against changes in sample weights.
However, we can simplify the process by focusing on the fact that each layer of DL is often represented as a convex function.
Therefore, we propose applying SS rules specifically to the last layer of DL models.

In this formulation, the layers preceding the last one are considered as
a fixed feature extraction process, even when the sample weights change
(see Figure \ref{fig:deep-learning-with-SS}).
We believe that this approach is valid when the change in sample weights is not significant.
We plan to experimentally evaluate the effectiveness of this formulation in Section \ref{sec:experiment-deep-learning}.

\begin{figure}[t]
\begin{center}
\includegraphics[width=0.6\hsize]{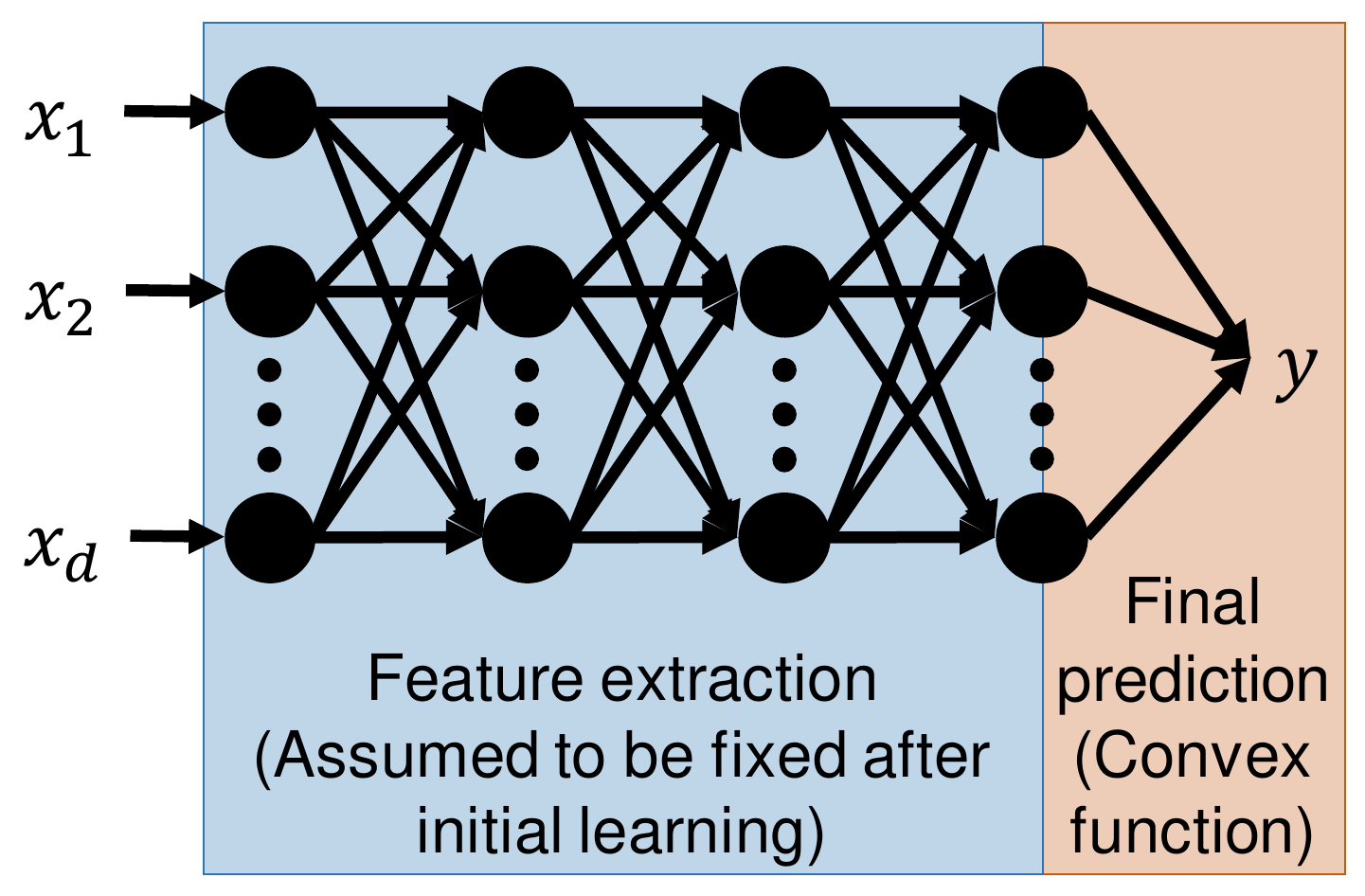}
\end{center}
\vspace{-1em}
\caption{Concept of how to apply SS for deep learning.
SS is applied to the last layer for the final prediction.}
\label{fig:deep-learning-with-SS}
\end{figure}

\section{Numerical Experiment} \label{sec:experiment}

\subsection{Experimental Settings} \label{sec:experimental-settings}

We evaluate the performances of DRSsS and DRSfS across different values of acceptable weight changes $S$ and
hyperparameters for regularization strength $\lambda$.
Performance is measured using \emph{safe screening rates}, representing the ratio of screened samples or features
to all samples or features.
We consider three setups:
DRSsS with L1-loss L2-regularized SVM (Section \ref{sec:l1loss-l2reg-svm}),
DRSfS with L2-loss L1-regularized SVM (Section \ref{sec:l2loss-l1reg-svm}),
and DRSsS with deep learning (Section \ref{sec:deep-learning}) where the last layer incorporates DRSsS with L1-loss L2-regularized SVM.

In these experiments, we set initialize the sample weights before change ($\tilde{\bm w}$) as $\tilde{\bm w} = \bm 1_n$.
Then, we set $S$ in DRSS for ${\cal W} := \{ \bm w \mid \|\bm w - \tilde{\bm w}\|_2\leq S \}$
(Section \ref{sec:DRSS-examples}) as follows:
\begin{itemize}
\item First we assume the weight change that
	the weights for positive samples ($\{i\mid y_i = +1\}$) from $1$ to $a$,
	while retaining the weights for negative samples ($\{i\mid y_i = -1\}$) as $1$.
\item Then, we defined $S$ as the size of weight change above; specifically, we set $S = \sqrt{n^+}|a - 1|$ ($n^+$: number of positive samples in the training dataset).
\end{itemize}
We vary $a$ within the range $0.9\leq a\leq 1.1$, assuming a maximum change of up to 10\% per sample weight.

\subsection{Relationship between the Weight Changes and Safe Screening Rate} \label{sec:experiment-weight-changes}

\begin{table}[t]
\caption{Datasets for DRSsS/DRSfS experiments.
All are binary classification datasets from LIBSVM dataset \cite{libsvmDataset}.
The mark $\dagger$ denotes datasets with one feature removed due to computational constraints.
See Appendix \ref{app:experimental-setup} for details.}
\label{tab:dataset-SS}
\begin{center}
\begin{tabular}{cl|rrr}
\hline
Task & \multicolumn{1}{c|}{Name} & \multicolumn{1}{c}{$n$} & \multicolumn{1}{c}{$n^+$} & \multicolumn{1}{c}{$d$} \\
\hline
DRSsS
& australian        &  690 &  307 & 15 \\
& breast-cancer     &  683 &  239 & 11 \\
& heart             &  270 &  120 & 14 \\
& ionosphere        &  351 &  225 & 35 \\
& sonar             &  208 &   97 & 61 \\
& splice (train)    & 1000 &  517 & 61 \\
& svmguide1 (train) & 3089 & 2000 & 5 \\
\hline
DRSsS
& madelon (train) & 2000 & 1000 & $\dagger$~500 \\
& sonar           &  208 &   97 & $\dagger$~60 \\
& splice (train)  & 1000 &  517 & 61 \\
\hline
\end{tabular}
\end{center}
\end{table}

First, we present safe screening rates for two SVM setups.
The datasets used in these experiments are detailed in Table \ref{tab:dataset-SS}.
In this experiment, we adapt the regularization hyperparameter $\lambda$ based on the characteristics of the data.
These details are described in Appendix \ref{app:experimental-setup}.

As an example, for the ``sonar'' dataset, we show the DRSsS result in Figure \ref{fig:SsS-example}
and the DRSfS result in Figure \ref{fig:SfS-example}.
Results for other datasets are presented in Appendix \ref{app:experiment}.

These plots allow us to assess the tolerance for changes in sample weights.
For instance,
with $a=0.98$ (weight of each positive sample is reduced by two percent, or equivalent weight change in L2-norm),
the sample screening rate is 0.31 for L1-loss L2-regularized SVM with $\lambda=\mathrm{6.58e+1}$, and
the feature screening rate is 0.29 for L2-loss L1-regularized SVM with $\lambda=\mathrm{3.47e+1}$.
This implies that, even if the weights are changed in such ranges,
a number of samples or features are still identified as redundant in the sense of prediction.

\subsection{Safe Sample Screening for Deep Learning Model} \label{sec:experiment-deep-learning}

\begin{figure}[t]
\includegraphics[width=\hsize]{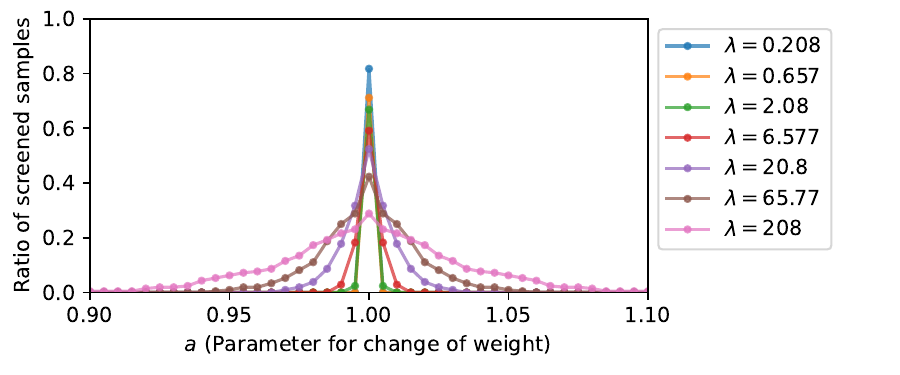}
\vspace{-2.5em}
\caption{Ratio of screened samples by DRSsS for dataset ``sonar''.}
\label{fig:SsS-example}

~

\includegraphics[width=\hsize]{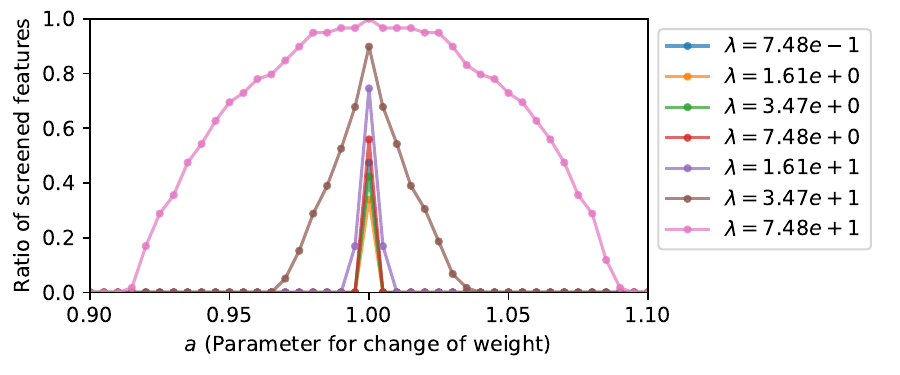}
\vspace{-2.5em}
\caption{Ratio of screened features by DRSfS for dataset ``sonar''.}
\label{fig:SfS-example}

~

\includegraphics[width=\hsize]{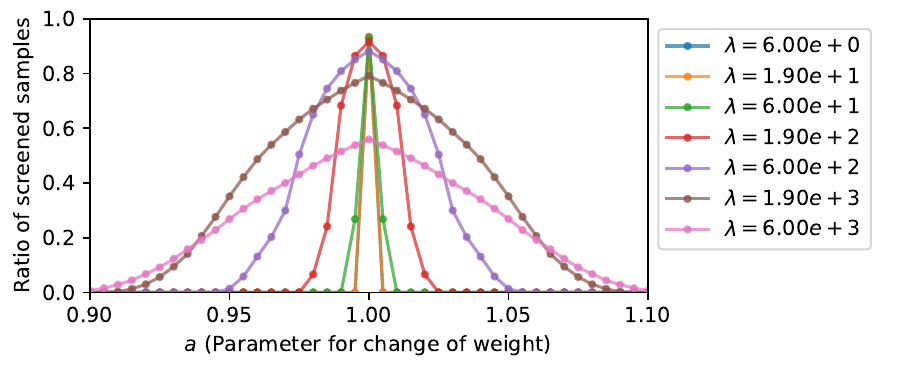}
\vspace{-2em}
\caption{Ratio of screened samples by DRSsS for dataset with CIFAR-10 dataset and DL model ResNet50.}
\label{fig:SsS-DL}
\end{figure}

We applied DRSsS to DL models (Section \ref{sec:deep-learning}), assuming that all layers are fixed except for the last layer.

We utilized a neural network architecture comprising the following components: firstly, \emph{ResNet50} \cite{he2016deep} with an output of 2,048 features, followed by a fully connected layer to reduce the features to 10, and finally, L1-loss L2-regularized SVM (Section \ref{sec:l1loss-l2reg-svm}) accompanied by the intercept feature (Remark \ref{rem:intercept}).

For the experiment, we employed the CIFAR-10 dataset \cite{cifar10}, a well-known benchmark dataset for image classification tasks.
We configured the network to classify images into two classes: ``airplane'' and ``automobile''.
Given that there are 5,000 images for each class,
we split the dataset into training:validation:testing=6:2:2, resulting in a total of 6,000 images in the training dataset.

The resulting safe sample screening rates are illustrated in Figure \ref{fig:SsS-DL}.
We observed similar outcomes to those obtained with ordinary SVMs in Section \ref{sec:experiment-weight-changes}.

This experiment validates the feasibility of applying DRSsS to DL models, demonstrating consistent results with traditional SVM setups.

\section{Conclusion} \label{sec:conclusion}

In this paper, we discussed DR-SS, considering the possible changes in sample weights to represent DR setup.
We developed a method for calculating SS that can handle changes in sample weights by introducing nontrivial computational techniques, such as constrained maximization of certain
convex functions (Section \ref{sec:maximize-convex-quadratic}).
Additionally, to address the constraint of SS, which typically applies to ML by minimizing convex functions, we provided an application to DL by applying SS to the last layer of DL model.
While this approach is an approximation, it holds certain validity.

For the future work, we aim to explore different environmental changes.
In this paper, we focused on weight constraint by L2-norm $\|\bm w - \tilde{\bm w}\|_2 \leq S$ (Section \ref{sec:DRSS-examples})
due to computational considerations.
However, when interpreting changes in weights,
the constraint of L1-norm $\|\bm w - \tilde{\bm w}\|_1 \leq S$ may be more appropriate, as it reflects changes in weights by altering the number of samples.
Furthermore, in the context of DR-SS for DL, we are interested in loosening the constraint of fixing the network except for the last layer.
Investigating this aspect could provide valuable insights into the flexibility of DR-SS methodologies in DL applications.

\section*{Software and Data}

The code and the data to reproduce the experiments are available as the attached file.

\section*{Potential Broader Impact}

This paper contributes to machine learning in dynamically changing environments,
a scenario increasingly prevalent in real-world data analyses.
We believe that, in such situations, ensuring prediction performance against environmental
changes and minimizing storage requirements for expanding datasets will be beneficial.
The method does not present significant ethical concerns or foreseeable societal consequences because this work is theoretical and, as of now, has no direct applications that might impact society or ethical considerations.

\section*{Acknowledgements}

This work was partially supported by MEXT KAKENHI (20H00601), JST CREST (JPMJCR21D3 including AIP challenge program, JPMJCR22N2), JST Moonshot R\&D (JPMJMS2033-05), JST AIP Acceleration Research (JPMJCR21U2), NEDO (JPNP18002, JPNP20006) and RIKEN Center for Advanced Intelligence Project.


\bibliography{ref-SS}
\bibliographystyle{unsrt}

\newpage
\appendix
\onecolumn

\section{Proofs} \label{app:proofs}

\subsection{General Lemmas}

\begin{lemma}\label{lem:fenchel-moreau}
For a convex function $f: \mathbb{R}^d\to\mathbb{R}\cup\{+\infty\}$,
$f^{**}$ is equivalent to $f$ if $f$ is convex, proper (i.e., $\exists \bm v\in\mathbb{R}^d:~f(\bm v) < +\infty$) and lower-semicontinuous.
\end{lemma}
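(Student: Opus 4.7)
The plan is to prove the two inequalities $f^{**} \le f$ and $f^{**} \ge f$ separately. The first is routine from the definitions and holds for \emph{any} function with a well-defined biconjugate, while the reverse direction is the substantive content of the Fenchel--Moreau theorem and requires all three hypotheses (convex, proper, lower-semicontinuous); I would establish it by contradiction using the strict separating hyperplane theorem applied to the epigraph of $f$.

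First I would observe that the identity $f^*(\bm v) = \sup_{\bm u}(\bm v^\top \bm u - f(\bm u))$ immediately gives $f(\bm u) \ge \bm v^\top \bm u - f^*(\bm v)$ for every $\bm u, \bm v$; taking the supremum over $\bm v$ on the right yields $f^{**}(\bm u) \le f(\bm u)$ pointwise, with no hypothesis needed on $f$.

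For the reverse direction, suppose for contradiction that $f^{**}(\bm u_0) \le s < f(\bm u_0)$ for some $\bm u_0$ and some $s \in \mathbb{R}$. The epigraph $\mathrm{epi}(f) = \{(\bm u, t) \in \mathbb{R}^{d+1} \mid f(\bm u) \le t\}$ is convex (since $f$ is convex), closed (since $f$ is lower-semicontinuous), and nonempty (since $f$ is proper), while the point $(\bm u_0, s)$ lies outside it. The strict separating hyperplane theorem then produces $(\bm a, b) \in \mathbb{R}^d \times \mathbb{R}$ and a scalar $c$ with
\[
\bm a^\top \bm u_0 + b s < c \le \bm a^\top \bm u + b t \quad \text{for all } (\bm u, t) \in \mathrm{epi}(f).
\]
Letting $t \to +\infty$ forces $b \ge 0$. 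When $b > 0$, setting $\bm v := -\bm a / b$ and evaluating the right inequality at $(\bm u, f(\bm u))$ gives $f^*(\bm v) \le -c/b$, so the definition of the biconjugate supplies $f^{**}(\bm u_0) \ge \bm v^\top \bm u_0 - f^*(\bm v) \ge (c - \bm a^\top \bm u_0)/b > s$, contradicting the choice of $s$.

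The main obstacle is the vertical case $b = 0$, where the hyperplane offers no direct control over $f^*$. I would resolve it by first invoking properness to pick any $\bm u_1 \in \mathrm{dom}(f)$, then applying the $b > 0$ case at the off-epigraph point $(\bm u_1, f(\bm u_1) - 1)$ to secure at least one $\bm v_0$ with $f^*(\bm v_0) < +\infty$, and finally perturbing along the vertical separator via $\bm v_M := \bm v_0 - M \bm a$ for $M > 0$. The vertical separation $\bm a^\top \bm u_0 < c \le \bm a^\top \bm u$ on $\mathrm{dom}(f)$ yields the bound $f^*(\bm v_M) \le f^*(\bm v_0) - Mc$, hence
\[
f^{**}(\bm u_0) \;\ge\; \bm v_M^\top \bm u_0 - f^*(\bm v_M) \;\ge\; \bm v_0^\top \bm u_0 - f^*(\bm v_0) + M(c - \bm a^\top \bm u_0),
\]
which tends to $+\infty$ as $M \to \infty$ because $c - \bm a^\top \bm u_0 > 0$, once again contradicting $f^{**}(\bm u_0) \le s < \infty$. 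The two cases together complete the proof.
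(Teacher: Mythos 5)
Your argument is correct, and it is essentially the classical epigraph-separation proof of the Fenchel--Moreau theorem; the paper itself gives no proof at all, deferring to Section 12 of Rockafellar's \emph{Convex Analysis}, so your write-up supplies a self-contained argument where the paper relies on a citation. The easy inequality $f^{**}\le f$ is handled correctly, the $b>0$ case of the separation is the standard computation, and your treatment of the vertical case $b=0$ (using properness to manufacture one $\bm v_0$ with $f^*(\bm v_0)<+\infty$ and then sending $\bm v_M=\bm v_0-M\bm a$ to exploit the strict gap $c-\bm a^\top\bm u_0>0$) is exactly the right fix and is carried out correctly. The only step you gloss over is why the separation at the auxiliary point $(\bm u_1, f(\bm u_1)-1)$ necessarily falls into the $b>0$ case: you should note that a vertical separator there would give $\bm a'^\top\bm u_1 < c' \le \bm a'^\top\bm u$ for all $\bm u\in\mathrm{dom}(f)$, which is impossible precisely because $\bm u_1\in\mathrm{dom}(f)$; with that one sentence added, the proof is complete and matches the textbook argument the paper points to.
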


\begin{proof}
See Section 12 of \cite{rockafellar1970convex} for example.
\end{proof}

Lemma \ref{lem:fenchel-moreau} is known as \emph{Fenchel-Moreau theorem}.
Especially, Lemma \ref{lem:fenchel-moreau} holds if $f$ is convex and $\forall \bm v\in\mathbb{R}^d:~f(\bm v)<+\infty$.

\begin{lemma}\label{lem:strong-convexity-smoothness}
For a convex function $f: \mathbb{R}^d\to\mathbb{R}\cup\{+\infty\}$,
\begin{itemize}
\item $f^*$ is $(1/\nu)$-strongly convex if $f$ is proper and $\nu$-smooth.
\item $f^*$ is $(1/\kappa)$-smooth if $f$ is proper, lower-semicontinuous and $\kappa$-strongly convex.
\end{itemize}
\end{lemma}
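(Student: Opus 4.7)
The plan is to prove each of the two implications by leveraging the classical duality between subgradients of $f$ and of $f^*$: for any proper lower-semicontinuous convex $f$, one has $\bm v \in \partial f(\bm u) \Leftrightarrow \bm u \in \partial f^*(\bm v)$, which follows from the definition of the conjugate together with the Fenchel--Moreau theorem (Lemma \ref{lem:fenchel-moreau}). Given this duality, the two statements follow from direct manipulations of the defining inequalities of strong convexity and smoothness.

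For the implication ``$\kappa$-strongly convex $\Rightarrow$ $(1/\kappa)$-smooth conjugate'', I would first use strong convexity to argue that the supremum defining $f^*(\bm v) = \sup_{\bm v'}(\bm v^\top \bm v' - f(\bm v'))$ is coercive and uniquely attained at some $\bm u(\bm v)$, so that $f^*$ is differentiable with $\nabla f^*(\bm v) = \bm u(\bm v)$. To bound $\|\nabla f^*(\bm v_1) - \nabla f^*(\bm v_2)\|_2$, set $\bm u_i := \nabla f^*(\bm v_i)$, so that $\bm v_i \in \partial f(\bm u_i)$. Writing the $\kappa$-strong-convexity lower bounds
\begin{align*}
f(\bm u_2) &\geq f(\bm u_1) + \bm v_1^\top (\bm u_2 - \bm u_1) + \tfrac{\kappa}{2}\|\bm u_2 - \bm u_1\|_2^2, \\
f(\bm u_1) &\geq f(\bm u_2) + \bm v_2^\top (\bm u_1 - \bm u_2) + \tfrac{\kappa}{2}\|\bm u_1 - \bm u_2\|_2^2
\end{align*}
and adding them yields $\kappa\|\bm u_1 - \bm u_2\|_2^2 \leq (\bm v_1 - \bm v_2)^\top(\bm u_1 - \bm u_2)$, and Cauchy--Schwarz then gives $\|\nabla f^*(\bm v_1) - \nabla f^*(\bm v_2)\|_2 \leq (1/\kappa)\|\bm v_1 - \bm v_2\|_2$, i.e.\ $(1/\kappa)$-smoothness of $f^*$.

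For the dual implication ``$\nu$-smooth $\Rightarrow$ $(1/\nu)$-strongly convex conjugate'', I plan to argue dually. One route is to first invoke the Baillon--Haddad co-coercivity identity $\langle \nabla f(\bm u_1) - \nabla f(\bm u_2), \bm u_1 - \bm u_2\rangle \geq (1/\nu)\|\nabla f(\bm u_1) - \nabla f(\bm u_2)\|_2^2$, which is equivalent to $\nu$-smoothness for convex $f$; then, setting $\bm v_i := \nabla f(\bm u_i)$ so that $\bm u_i \in \partial f^*(\bm v_i)$, this rearranges into the $(1/\nu)$-strong-monotonicity inequality for $\partial f^*$, which characterizes $(1/\nu)$-strong convexity of $f^*$. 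A cleaner alternative is to bypass Baillon--Haddad entirely: since $f^{**} = f$ by Lemma \ref{lem:fenchel-moreau}, the result of the first part applied to $f^*$ in place of $f$ already gives the desired converse.

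The main obstacle I anticipate is the paper's own definition of ``$\mu$-smooth'' given in Table \ref{tab:definitions}, which literally reads $|f(\bm v) - f(\bm v')| \leq \mu \|\bm v - \bm v'\|_2$, i.e.\ Lipschitz continuity of $f$ itself. Under that literal reading, the first claim of the lemma cannot hold, since a Lipschitz convex function has conjugate supported on a bounded set and hence cannot be strongly convex on all of $\mathbb{R}^n$. The proof must therefore be carried out under the standard interpretation that ``$\mu$-smooth'' means ``$\nabla f$ is $\mu$-Lipschitz'', which is evidently what is intended for the Fenchel duality to go through and for the lemma to be usable in Lemma \ref{lem:gap-sphere-dual}.
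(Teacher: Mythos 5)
The paper does not actually spell out a proof of this lemma: it simply cites Section X.4.2 of \cite{hiriart1993convex}, so your self-contained argument is necessarily a different and more explicit route. Your treatment of the first implication---strong convexity forces the supremum defining $f^*$ to be uniquely attained, hence $f^*$ is differentiable with $\nabla f^*(\bm v)=\bm u(\bm v)$, and adding the two strong-convexity subgradient inequalities gives $\kappa\|\bm u_1-\bm u_2\|_2^2\le(\bm v_1-\bm v_2)^\top(\bm u_1-\bm u_2)$, whence $(1/\kappa)$-Lipschitzness of $\nabla f^*$ by Cauchy--Schwarz---is the standard argument and is correct. Your primary route for the second implication, via the Baillon--Haddad co-coercivity inequality together with the characterization of strong convexity of $f^*$ by strong monotonicity of $\partial f^*$, is also sound, although both invoked facts are themselves theorems of comparable depth to the statement being proved, so this is a reduction to known results rather than an elementary proof (which is acceptable given the paper only gives a textbook citation). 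Your remark that the table's literal definition of $\mu$-smoothness (Lipschitzness of $f$ itself rather than of its gradient) would make the first bullet false is a correct reading of a genuine inconsistency in the paper, and the gradient-Lipschitz interpretation you adopt is the one actually needed for Lemma~\ref{lem:gap-sphere-dual}.

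The one genuine flaw is the ``cleaner alternative'' you propose for the second implication: it is circular as stated. Applying your first part to $f^*$ in place of $f$ yields ``if $f^*$ is $\kappa$-strongly convex, then $f^{**}=f$ is $(1/\kappa)$-smooth,'' which runs in the wrong direction---it assumes the strong convexity of $f^*$ that you are trying to conclude and deduces the smoothness of $f$ that you are assuming. The biconjugation shortcut would only be legitimate if you had the converse of the first part as well, i.e.\ the full equivalence ``$g$ is $\kappa$-strongly convex $\Leftrightarrow$ $g^*$ is $(1/\kappa)$-smooth'' for proper lower-semicontinuous convex $g$, which your argument does not establish. Either drop that alternative or retain the Baillon--Haddad route as the actual proof of the second bullet.
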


\begin{proof}
See Section X.4.2 of \cite{hiriart1993convex} for example.
\end{proof}

\begin{lemma}\label{lem:strong-convexity-sphere}
Suppose that $f: \mathbb{R}^d\to\mathbb{R}\cup\{+\infty\}$ is a $\kappa$-strongly convex function,
and let $\bm v^* = \targmin_{\bm v\in\mathbb{R}^d} f(\bm v)$ be the minimizer of $f$.
Then, for any $\bm v\in\mathbb{R}^d$, we have
\begin{align*}
\| \bm v - \bm v^* \|_2 \leq \sqrt{\frac{2}{\kappa}[f(\bm v) - f(\bm v^*)]}.
\end{align*}
\end{lemma}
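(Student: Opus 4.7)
The plan is to upgrade the first-order optimality at the minimizer $\bm v^*$ into the stronger quadratic growth bound $f(\bm v) - f(\bm v^*) \geq \kappa \|\bm v - \bm v^*\|_2^2$, from which the stated square-root inequality follows by rearrangement (up to the multiplicative constant appearing in the statement). The single tool I need is that, by the paper's definition, $g(\bm v) := f(\bm v) - \kappa \|\bm v\|_2^2$ is convex, so the full machinery of convex analysis applies to $g$ while $f$ itself carries the quadratic slack.

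First I would record that, since $\bm v^*$ minimizes $f$, we have $\bm 0 \in \partial f(\bm v^*)$. Because the map $\bm v \mapsto \kappa \|\bm v\|_2^2$ is smooth with gradient $2\kappa \bm v$, the subdifferential sum rule yields $-2\kappa \bm v^* \in \partial g(\bm v^*)$. Second, I would apply the subgradient inequality for the convex function $g$ at $\bm v^*$ with this particular subgradient:
\[
g(\bm v) - g(\bm v^*) \geq (-2\kappa \bm v^*)^\top (\bm v - \bm v^*).
\]
Third, I would expand the definition of $g$ and use the polarization identity $\|\bm v\|_2^2 - \|\bm v^*\|_2^2 - 2 \bm v^{*\top}(\bm v - \bm v^*) = \|\bm v - \bm v^*\|_2^2$; the linear cross terms cancel, leaving the pure quadratic lower bound $f(\bm v) - f(\bm v^*) \geq \kappa \|\bm v - \bm v^*\|_2^2$. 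Taking square roots and dividing gives the claimed inequality.

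There is no serious obstacle; the proof is a textbook use of strong convexity. The only point requiring mild care is handling subdifferentials rather than gradients, since $f$ may be nonsmooth (e.g.\ the L1-regularized objectives appearing later in the paper); this is why the optimality condition is stated through $\partial f$ and the sum rule is invoked, rather than writing $\nabla f(\bm v^*) = \bm 0$. An equivalent route that avoids subgradients altogether is to apply the three-point convexity inequality for $g$ along the segment joining $\bm v$ and $\bm v^*$,
\[
g(\lambda \bm v + (1-\lambda) \bm v^*) \leq \lambda g(\bm v) + (1-\lambda) g(\bm v^*),
\]
rewrite in terms of $f$ so that the squared-norm terms produce an extra $\kappa \lambda (1-\lambda) \|\bm v - \bm v^*\|_2^2$, use $f(\bm v^*) \leq f(\lambda \bm v + (1-\lambda) \bm v^*)$ to drop the left-hand side, divide by $\lambda$, and let $\lambda \downarrow 0$ to recover the same quadratic lower bound.
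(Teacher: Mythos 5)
Your proof is correct. Note that the paper does not prove this lemma at all; it simply cites the Gap Safe screening paper of Ndiaye et al., so your argument supplies the standard quadratic-growth derivation that the authors outsource to the literature. Each step checks out: $\bm 0 \in \partial f(\bm v^*)$ at the minimizer, the sum rule with the smooth quadratic part giving $-2\kappa \bm v^* \in \partial g(\bm v^*)$, and the expansion collapsing the cross terms to $f(\bm v) - f(\bm v^*) \geq \kappa \|\bm v - \bm v^*\|_2^2$. You were also right to flag the constant: with the paper's Table~\ref{tab:definitions} convention (``$f(\bm v) - \kappa\|\bm v\|_2^2$ convex,'' without the usual factor $1/2$), your bound is a factor $\sqrt{2}$ stronger than the stated $\sqrt{\tfrac{2}{\kappa}[f(\bm v) - f(\bm v^*)]}$, which therefore follows a fortiori; under the standard convention ($f - \tfrac{\kappa}{2}\|\cdot\|_2^2$ convex, which is surely what the authors and the cited reference intend) the identical argument yields exactly the stated constant. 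The alternative limiting argument along the segment is also valid and avoids the subdifferential sum rule, which is a nice touch given that $f$ may be nonsmooth and extended-valued.
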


\begin{proof}
See \cite{ndiaye2015gap} for example.
\end{proof}

\begin{lemma} \label{lem:optimize-linear}
For any vector $\bm a, \bm c\in\mathbb{R}^n$ and $S > 0$,
\begin{align*}
& \min_{\bm v\in\mathbb{R}^n:~\|\bm v - \bm c\|_2\leq S} \bm a^\top \bm v = \bm a^\top \bm c - S\|\bm a\|_2,
& \max_{\bm v\in\mathbb{R}^n:~\|\bm v - \bm c\|_2\leq S} \bm a^\top \bm v = \bm a^\top \bm c + S\|\bm a\|_2. \\
\end{align*}
\end{lemma}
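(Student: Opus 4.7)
The plan is to reduce the problem to the standard fact that a linear functional on a Euclidean ball is extremized at the boundary in the direction of (or opposite to) the defining vector, which in turn is a direct consequence of the Cauchy--Schwarz inequality.

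First I would translate the feasible set by substituting $\bm u := \bm v - \bm c$, so that the constraint $\|\bm v - \bm c\|_2 \leq S$ becomes $\|\bm u\|_2 \leq S$ and the objective becomes $\bm a^\top \bm v = \bm a^\top \bm c + \bm a^\top \bm u$. Since $\bm a^\top \bm c$ is constant in $\bm u$, both the maximization and the minimization reduce to extremizing $\bm a^\top \bm u$ over the closed ball of radius $S$ centred at the origin.

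Next, applying the Cauchy--Schwarz inequality gives $|\bm a^\top \bm u| \leq \|\bm a\|_2 \|\bm u\|_2 \leq S \|\bm a\|_2$ for every feasible $\bm u$, so $-S\|\bm a\|_2 \leq \bm a^\top \bm u \leq S\|\bm a\|_2$. For the matching lower/upper bounds I would exhibit explicit maximizers: when $\bm a \neq \bm 0_n$, the choices $\bm u = \pm S \bm a / \|\bm a\|_2$ are feasible (their norm equals $S$) and attain $\bm a^\top \bm u = \pm S\|\bm a\|_2$; translating back, the optimizers are $\bm v = \bm c \pm S \bm a / \|\bm a\|_2$, with the plus sign for the maximum and the minus sign for the minimum. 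The degenerate case $\bm a = \bm 0_n$ is trivial because both sides equal $\bm a^\top \bm c = 0$.

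There is no real obstacle here: the entire argument rests on Cauchy--Schwarz plus an explicit attainer, and the only thing to be careful about is separating the $\bm a = \bm 0_n$ case so that the normalization $\bm a / \|\bm a\|_2$ is well-defined. The proof is a couple of lines once the change of variables is made.
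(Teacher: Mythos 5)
Your proposal is correct and follows essentially the same route as the paper's proof: the Cauchy--Schwarz inequality applied to $\bm a^\top(\bm v - \bm c)$ together with the explicit attainers $\bm v = \bm c \pm (S/\|\bm a\|_2)\,\bm a$. The only (harmless) differences are the cosmetic change of variables $\bm u = \bm v - \bm c$ and your explicit treatment of the degenerate case $\bm a = \bm 0_n$, which the paper leaves implicit.
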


\begin{proof}
By Cauchy-Schwarz inequality,
\begin{align*}
& -\|\bm a\|_2 \|\bm v - \bm c\|_2 \leq \bm a^\top (\bm v - \bm c) \leq \|\bm a\|_2 \|\bm v - \bm c\|_2.
\end{align*}
Noticing that the first inequality becomes equality if $\exists\omega>0:~\bm a = -\omega(\bm v - \bm c)$,
while the second inequality becomes equality if $\exists\omega^\prime>0:~\bm a = \omega^\prime(\bm v - \bm c)$.
Moreover, since $\|\bm v - \bm c\|_2\leq S$,
\begin{align*}
& - S \|\bm a\|_2 \leq \bm a^\top (\bm v - \bm c) \leq S \|\bm a\|_2
\end{align*}
also holds, with the equality holds if $\|\bm v - \bm c\|_2 = S$.

On the other hand, if we take $\bm v$ that satisfies both of the equality conditions
of Cauchy-Schwarz inequality above, that is,
\begin{itemize}
\item (for the first inequality being equality) $\bm v = \bm c - (S/\|\bm a\|_2)\bm a$,
\item (for the second inequality being equality) $\bm v = \bm c + (S/\|\bm a\|_2)\bm a$,
\end{itemize}
then the inequalities become equalities. This proves that $- S \|\bm a\|_2$ and $S \|\bm a\|_2$ are surely the minimum and maximum of $\bm a^\top (\bm v - \bm c)$, respectively.
\end{proof}

\subsection{Derivation of Dual Problem by Fenchel's Duality Theorem} \label{app:fenchel}

As the formulation of Fenchel's duality theorem, we follow the one in Section 31 of \cite{rockafellar1970convex}.

\begin{lemma}[A special case of Fenchel's duality theorem: $f, g<+\infty$] \label{lm:Fenchel-duality}
Let $f: \mathbb{R}^n\to\mathbb{R}$ and $g: \mathbb{R}^d\to\mathbb{R}$ be convex functions,
and $A\in\mathbb{R}^{n\times d}$ be a matrix.
Moreover, we define
\begin{align}
& \bm v^* := \min_{\bm v\in\mathbb{R}^d} [f(A\bm v) + g(\bm v)], \label{eq:FD-primal} \\
& \bm u^* := \max_{\bm u\in\mathbb{R}^n} [ -f^*(-\bm u) - g^*(A^\top\bm u)]. \label{eq:FD-dual}
\end{align}
Then Fenchel's duality theorem assures that
\begin{align*}
& f(A\bm v^*) + g(\bm v^*) = -f^*(-\bm u^*) - g^*(A^\top\bm u^*), \\
& -\bm u^* \in \partial f(A \bm v^*), \\
& \bm v^* \in \partial g^*(A^\top \bm u^*).
\end{align*}
\end{lemma}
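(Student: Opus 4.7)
The plan is to combine the Fenchel-Young inequality (which follows directly from the definition of the convex conjugate) with the biconjugate identity of Lemma \ref{lem:fenchel-moreau}, and to upgrade weak duality to strong duality by invoking the standard Fenchel duality theorem (Theorem 31.1 of \cite{rockafellar1970convex}, as cited through Lemma \ref{lem:fenchel-moreau}). The three assertions then fall out by a tight-Fenchel-Young argument at the optimal pair $(\bm v^*, \bm u^*)$.

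First I would establish weak duality by two applications of Fenchel-Young: for $f$ at the pair $(A\bm v, -\bm u)$ this gives $f(A\bm v) + f^*(-\bm u) \geq -\bm u^\top A\bm v$, and for $g$ at the pair $(\bm v, A^\top \bm u)$ this gives $g(\bm v) + g^*(A^\top \bm u) \geq \bm u^\top A\bm v$. Adding these, the cross terms $\pm \bm u^\top A\bm v$ cancel and we obtain $f(A\bm v) + g(\bm v) \geq -f^*(-\bm u) - g^*(A^\top \bm u)$ for every $\bm v$ and $\bm u$. Next, I would upgrade this to strong duality: because $f$ and $g$ are convex and finite on all of $\mathbb{R}^n$ and $\mathbb{R}^d$ respectively, their effective domains coincide with the full ambient spaces, so the relative-interior constraint qualification of Rockafellar's Fenchel theorem holds trivially. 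This yields $f(A\bm v^*) + g(\bm v^*) = -f^*(-\bm u^*) - g^*(A^\top \bm u^*)$, which is the first of the three claims; the attainment of the optima $\bm v^*$ and $\bm u^*$ is granted by the notation of the statement itself.

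Finally I would deduce the two subgradient inclusions by rewriting strong duality as
\begin{equation*}
[f(A\bm v^*) + f^*(-\bm u^*) + \bm u^{*\top} A\bm v^*] + [g(\bm v^*) + g^*(A^\top \bm u^*) - \bm u^{*\top} A\bm v^*] = 0.
\end{equation*}
Each bracket is nonnegative by Fenchel-Young, so both must vanish. Tightness in Fenchel-Young is equivalent to the subgradient relation, giving $-\bm u^* \in \partial f(A\bm v^*)$ from the first bracket and $A^\top \bm u^* \in \partial g(\bm v^*)$ from the second; applying Lemma \ref{lem:fenchel-moreau} to $g$ (convex and real-valued, hence proper, continuous, and lower-semicontinuous) then converts the latter into $\bm v^* \in \partial g^*(A^\top \bm u^*)$ via the standard subgradient-inversion identity $\bm u \in \partial g(\bm v) \Leftrightarrow \bm v \in \partial g^*(\bm u)$. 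The main obstacle is the strong-duality step, which in general requires a nontrivial constraint qualification; here the hypothesis that both $f$ and $g$ are real-valued everywhere trivializes this qualification, reducing the step to a direct citation of the classical theorem, after which the rest of the argument is pure bookkeeping with Fenchel-Young.
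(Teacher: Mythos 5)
Your proof is correct, but it takes a genuinely different route from the paper's. The paper's argument (Appendix A.2) is Lagrangian: it introduces a dummy variable $\bm\psi$ with the constraint $A\bm v = \bm\psi$, a multiplier $\bm u$, exchanges $\min$ and $\max$, and reads off the dual objective by recognizing the conjugates inside the inner maximization; the two subgradient relations then come from the stationarity conditions of the Lagrangian, and the reverse direction (dualizing the dual with a multiplier $\bm v$ and invoking Lemma \ref{lem:fenchel-moreau} to get $f^{**}=f$, $g^{**}=g$) is used to close the loop. You instead establish weak duality by two applications of Fenchel--Young, import strong duality from Rockafellar's Theorem 31.1 (correctly observing that finiteness of $f$ and $g$ everywhere makes the relative-interior constraint qualification vacuous), and extract both inclusions from the equality case of Fenchel--Young, converting $A^\top\bm u^*\in\partial g(\bm v^*)$ into $\bm v^*\in\partial g^*(A^\top\bm u^*)$ via conjugate subgradient inversion (which needs $g$ proper, convex and lower-semicontinuous --- satisfied here since a finite convex function on $\mathbb{R}^d$ is continuous). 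What each approach buys: your route makes the zero-duality-gap decomposition into two nonnegative brackets explicit, which cleanly isolates exactly where each optimality condition comes from and where lower-semicontinuity is needed; the paper's Lagrangian sketch is more constructive in that it \emph{derives} the form of the dual objective rather than verifying it, but it silently assumes the min--max exchange, which is the same strong-duality content you cite explicitly. Both proofs ultimately lean on the classical theorem at the same point, so neither is more self-contained than the other.
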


\begin{proof}[Sketch of the proof]
Introducing a dummy variable $\bm\psi\in\mathbb{R}^n$ and a Lagrange multiplier $\bm u\in\mathbb{R}^n$, we have
\begin{align}
& \min_{\bm v\in\mathbb{R}^d} [f(A\bm v) + g(\bm v)]
	= \max_{\bm u\in\mathbb{R}^n} \min_{\bm v\in\mathbb{R}^d,~\bm\psi\in\mathbb{R}^n} [f(\bm\psi) + g(\bm v) - \bm u^\top(A\bm v - \bm\psi)] \label{eq:Fenchel-Lagrange}\\
& = - \min_{\bm u\in\mathbb{R}^n} \max_{\bm v\in\mathbb{R}^d,~\bm\psi\in\mathbb{R}^n} [-f(\bm\psi) - g(\bm v) + \bm u^\top(A\bm v - \bm\psi)]
	= - \min_{\bm u\in\mathbb{R}^n} \max_{\bm v\in\mathbb{R}^d,~\bm\psi\in\mathbb{R}^n} [\{ (-\bm u)^\top \bm\psi - f(\bm\psi) \} + \{ (A^\top \bm u)^\top \bm v - g(\bm v) \}] \nonumber\\
& = - \min_{\bm u\in\mathbb{R}^n} [f^*(-\bm u) + g^*(A^\top \bm u)]
	= \max_{\bm u\in\mathbb{R}^n} [-f^*(-\bm u) - g^*(A^\top \bm u)]. \label{eq:Fenchel-dual}
\end{align}
Moreover, by the optimality condition of a problem with a Lagrange multiplier \eqref{eq:Fenchel-Lagrange},
the optima of it, denoted by $\bm v^*$, $\bm\psi^*$ and $\bm u^*$, must satisfy
\begin{align*}
A \bm v^* = \bm\psi^*,
\quad
A^\top \bm u^* \in\partial g(\bm v^*),
\quad
-\bm u^* \in \partial f(\bm\psi^*) = \partial f(A \bm v^*).
\end{align*}
On the other hand, introducing a dummy variable $\bm\phi\in\mathbb{R}^d$ and a Lagrange multiplier $\bm v\in\mathbb{R}^d$ for \eqref{eq:Fenchel-dual}, we have
\begin{align}
& \max_{\bm u\in\mathbb{R}^n} [-f^*(-\bm u) - g^*(A^\top \bm u)]
	= \min_{\bm v\in\mathbb{R}^d} \max_{\bm u\in\mathbb{R}^n, \bm\phi\in\mathbb{R}^d} [-f^*(-\bm u) - g^*(\bm\phi) - \bm v^\top(A^\top \bm u - \bm\phi)] \label{eq:Fenchel-dual-Lagrange}\\
& = \min_{\bm v\in\mathbb{R}^d} \max_{\bm u\in\mathbb{R}^n, \bm\phi\in\mathbb{R}^d} [\{(A\bm v)^\top(-\bm u) - f^*(-\bm u)\} + \{\bm v^\top \bm\phi - g^*(\bm\phi)\}] \nonumber \\
& = \min_{\bm v\in\mathbb{R}^d} [f^{**}(A\bm v) + g^{**}(\bm v)]
	= \min_{\bm v\in\mathbb{R}^d} [f(A\bm v) + g(\bm v)]. \quad(\because~\text{Lemma \ref{lem:fenchel-moreau}})\nonumber
\end{align}
Likely above, by the optimality condition of a problem with a Lagrange multiplier \eqref{eq:Fenchel-dual-Lagrange},
the optima of it, denoted by $\bm u^*$, $\bm\phi^*$ and $\bm v^*$, must satisfy
\begin{align*}
A^\top \bm u^* = \bm\phi^*,
\quad
\bm v^* \in \partial g^*(\bm\phi^*) = \partial g^*(A^\top \bm u^*),
\quad
A \bm v^* \in \partial f(-\bm u^*).
\end{align*}
\end{proof}

\begin{lemma}[Dual problem of weighted regularized empirical risk minimization (weighted RERM)] \label{lm:dual-WRERM}
For the minimization problem
\begin{align}
& \bm\beta^{*(\bm w)} := \argmin_{\bm\beta\in\mathbb{R}^d} P_{\bm w}(\bm\beta), \nonumber
	\quad
	\text{where}
	\quad
	P_{\bm w}(\bm\beta) := \sum_{i=1}^n w_i \ell_{y_i}(\check{X}_{i:}\bm\beta) + \rho(\bm\beta), \tag{\eqref{eq:primal} restated}
\end{align}
we define the dual problem as the one obtained by applying Fenchel's duality theorem (Lemma \ref{lm:Fenchel-duality}), which is defined as
\begin{align}
& \bm\alpha^{*(\bm w)} := \argmax_{\bm\alpha\in\mathbb{R}^n} D_{\bm w}(\bm\alpha), \nonumber
	\quad
	\text{where}
	\quad
	D_{\bm w}(\bm\alpha) := -\sum_{i=1}^n w_i \ell^*_{y_i}(-\gamma_i \alpha_i) + \rho^*(((\bm\gamma\otimes\bm w)\sqtimes\check{X})^\top \bm\alpha). \tag{\eqref{eq:dual} restated}
\end{align}
Moreover, $\bm\beta^{*(\bm w)}$ and $\bm\alpha^{*(\bm w)}$ must satisfy
\begin{align}
& P_{\bm w}(\bm\beta^{*(\bm w)}) = D_{\bm w}(\bm\alpha^{*(\bm w)}), \tag{\eqref{eq:strong-duality} restated}\\
& \bm\beta^{*(\bm w)} \in \partial\rho^*(((\bm\gamma\otimes\bm w)\sqtimes\check{X})^\top \bm\alpha^{*(\bm w)}), \tag{\eqref{eq:KKT-dual2primal} restated}\\
& \forall i\in[n]:\quad -\gamma_i \alpha^{*(\bm w)}_i \in \partial\ell_{y_i}(\check{X}_{i:}\bm\beta^{*(\bm w)}). \tag{\eqref{eq:KKT-primal2dual} restated}
\end{align}
\end{lemma}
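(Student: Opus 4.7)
The plan is to apply Lemma \ref{lm:Fenchel-duality} directly to \eqref{eq:primal}, compute the relevant convex conjugates, and then rescale the dual variable so that the resulting expression matches \eqref{eq:dual}. Specifically, I will set $\bm v := \bm\beta$, $A := \check{X} \in \mathbb{R}^{n \times d}$, $g := \rho$, and $f(\bm\psi) := \sum_{i=1}^n w_i \ell_{y_i}(\psi_i)$ for $\bm\psi \in \mathbb{R}^n$. Both $f$ and $g$ are convex and real-valued (since $\ell_{y_i}$ and $\rho$ take values in $\mathbb{R}$), so the hypotheses of Lemma \ref{lm:Fenchel-duality} are satisfied, and with these choices $f(A\bm\beta) + g(\bm\beta)$ is precisely $P_{\bm w}(\bm\beta)$.

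Next I would compute $f^*$. Because $f$ is separable across coordinates, $f^*(\bm u) = \sum_{i=1}^n (w_i \ell_{y_i})^*(u_i)$, and a one-line calculation gives $(w_i \ell_{y_i})^*(u_i) = w_i \ell_{y_i}^*(u_i / w_i)$ whenever $w_i > 0$. Substituting into the dual \eqref{eq:FD-dual} and reparametrizing $u_i = w_i \gamma_i \alpha_i$ (so that $-u_i / w_i = -\gamma_i \alpha_i$), the first term becomes $-\sum_i w_i \ell_{y_i}^*(-\gamma_i \alpha_i)$. For the second term, $A^\top \bm u = \check{X}^\top \mathrm{diag}(\bm w \otimes \bm\gamma) \bm\alpha = ((\bm\gamma \otimes \bm w) \sqtimes \check{X})^\top \bm\alpha$, which yields $-\rho^*(((\bm\gamma \otimes \bm w) \sqtimes \check{X})^\top \bm\alpha)$. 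Together these give exactly $D_{\bm w}(\bm\alpha)$ in \eqref{eq:dual}.

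For the optimality relations, \eqref{eq:strong-duality} is immediate from the strong-duality assertion of Lemma \ref{lm:Fenchel-duality}. The inclusion $\bm v^* \in \partial g^*(A^\top \bm u^*)$ translates directly into \eqref{eq:KKT-dual2primal}. For \eqref{eq:KKT-primal2dual}, the inclusion $-\bm u^* \in \partial f(A \bm v^*)$, combined with separability of $f$, gives $-u_i^* \in w_i \partial \ell_{y_i}(\check{X}_{i:}\bm\beta^{*(\bm w)})$ for every $i$; substituting $u_i^* = w_i \gamma_i \alpha_i^{*(\bm w)}$ and dividing by $w_i > 0$ yields the desired relation.

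The only subtle point I anticipate is the degenerate case $w_i = 0$: there the substitution $u_i = w_i \gamma_i \alpha_i$ collapses, and neither the primal nor the dual objective depends on $\alpha_i$. I would handle this by noting that in this case both the $i$-th term of $D_{\bm w}$ and the $i$-th primal loss term vanish, so $\alpha_i^{*(\bm w)}$ is an unconstrained free variable that may be chosen to satisfy \eqref{eq:KKT-primal2dual}. Beyond this bookkeeping, no further obstacles are expected, since separability makes the subdifferential of the sum equal to the coordinate-wise subdifferentials and the rescaling is purely cosmetic.
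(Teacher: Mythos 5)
Your proposal is correct and follows essentially the same route as the paper: the same choices $f(\bm\psi)=\sum_i w_i\ell_{y_i}(\psi_i)$, $g=\rho$, $A=\check{X}$ in Fenchel's duality theorem, the same conjugate computation $(w_i\ell_{y_i})^*(u_i)=w_i\ell_{y_i}^*(u_i/w_i)$, the same reparametrization $u_i=w_i\gamma_i\alpha_i$, and the same translation of the two subgradient inclusions into \eqref{eq:KKT-dual2primal} and \eqref{eq:KKT-primal2dual}. Your explicit handling of the degenerate case $w_i=0$ is a small extra piece of bookkeeping that the paper leaves implicit, but it does not change the argument.
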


\begin{proof}
To apply Fenchel's duality theorem, we have only to set $f$, $g$ and $A$ in Lemma \ref{lm:Fenchel-duality} as
\begin{align*}
f(\bm u) := \sum_{i=1}^n w_i \ell_{y_i}(u_i),
\quad
g(\bm\beta) := \rho(\bm\beta),
\quad
A := \check{X}.
\end{align*}
Here, noticing that
\begin{align*}
& f^*(\bm u)
	= \sup_{\bm u^\prime\in\mathbb{R}^n} [\bm u^\top \bm u^\prime - \sum_{i=1}^n w_i \ell_{y_i}(u^\prime_i)]
	= \sup_{\bm u^\prime\in\mathbb{R}^n} \sum_{i=1}^n \left[u_i u^\prime_i - w_i \ell_{y_i}(u^\prime_i) \right] \\
& = \sup_{\bm u^\prime\in\mathbb{R}^n} \sum_{i=1}^n w_i \left[\frac{u_i}{w_i} u^\prime_i - \ell_{y_i}(u^\prime_i) \right]
	= \sum_{i=1}^n w_i \ell_{y_i}^*\left(\frac{u_i}{w_i}\right),
\end{align*}
from \eqref{eq:FD-dual} we have
\begin{align*}
-f^*(-\bm u) - g^*(A^\top\bm u)
= - \sum_{i=1}^n w_i \ell_{y_i}^*\left(-\frac{u_i}{w_i}\right) - \rho^*(\check{X}^\top\bm u).
\end{align*}
Replacing $u_i\gets \gamma_i w_i \alpha_i$, that is, $\bm u\gets (\bm\gamma \otimes \bm w \otimes \bm\alpha)$,
we have the dual problem \eqref{eq:dual}.

The relationships between the primal and the dual problem are described as follows:
\begin{align*}
& -\bm u^* \in \partial f(A \bm v^*)
	~\Rightarrow~
	-\bm\gamma \otimes \bm w \otimes \bm\alpha^{*(\bm w)} \in \partial f(\check{X} \bm\beta^{*(\bm w)})
	~\Rightarrow~
	-\gamma_i w_i \alpha^{*(\bm w)}_i \in w_i \partial\ell_{y_i}(\check{X}_{i:}\bm\beta^{*(\bm w)}) \\
	& \Rightarrow
	-\gamma_i \alpha^{*(\bm w)}_i \in \partial\ell_{y_i}(\check{X}_{i:}\bm\beta^{*(\bm w)}), \\
& \bm v^* \in \partial g^*(A^\top \bm u^*)
	~\Rightarrow~
	\bm\beta^{*(\bm w)} \in \partial g^*(\check{X}^\top \bm\gamma \otimes \bm w \otimes \bm\alpha^{*(\bm w)})
	= \partial g^*(((\bm\gamma \otimes \bm w)\sqtimes\check{X})^\top \bm\alpha^{*(\bm w)}).
\end{align*}
\end{proof}

\subsection{Proof of Lemma \ref{lem:gap-sphere-primal}} \label{app:gap-sphere-primal}

\begin{proof} \cite{ndiaye2015gap}
\begin{align*}
\| \hat{\bm\beta} - \bm\beta^{*(\bm w)} \|_2 
	& \leq \sqrt{\frac{2}{\lambda}[P_{\bm w}(\hat{\bm\beta}) - P_{\bm w}(\bm\beta^{*(\bm w)})]}
	\tag{$\because$ setting $f\gets P_{\bm w}$ in Lemma \ref{lem:strong-convexity-sphere}} \\
	& = \sqrt{\frac{2}{\lambda}[P_{\bm w}(\hat{\bm\beta}) - D_{\bm w}(\bm\alpha^{*(\bm w)})]}
	\tag{$\because$ \eqref{eq:strong-duality}} \\
	& \leq \sqrt{\frac{2}{\lambda}[P_{\bm w}(\hat{\bm\beta}) - D_{\bm w}(\hat{\bm\alpha})]}.
	\tag{$\because$ $\bm\alpha^{*(\bm w)}$ is a maximizer of $D_{\bm w}$}
\end{align*}
\end{proof}

\subsection{Proof of Lemma \ref{lem:gap-sample-screening}} \label{app:gap-sample-screening}

\begin{proof}
Due to \eqref{eq:KKT-primal2dual}, if $\partial\ell_{y_i}(\check{X}_{i:}\bm\beta^{*(\bm w)}) = \{0\}$ is assured, then $\alpha_i^{*(\bm w)} = 0$ is assured.
Since we do not know $\bm\beta^{*(\bm w)}$ but know ${\cal B}^{*(\bm w)}$ (Lemma \ref{lem:gap-sphere-primal}),
we can assure $\alpha_i^{*(\bm w)} = 0$ if $\bigcup_{\bm\beta\in{\cal B}^{*(\bm w)}} \partial\ell_{y_i}(\check{X}_{i:}\bm\beta) = \{0\}$ is assured.
Noticing that $\partial\ell_{y_i}$ is monotonically increasing\footnote{Since $\partial\ell_{y_i}$ is a multi-valued function, the monotonicity must be defined accordingly: we call a multi-valued function $F: \mathbb{R}\to 2^{\mathbb{R}}$ is monotonically increasing if, for any $t < t^\prime$, $F$ must satisfy ``$\forall s\in F(t)$, $\forall s^\prime\in F(t^\prime)$:~$s\leq s^\prime$''.}, we have
\begin{align*}
& \bigcup_{\bm\beta\in{\cal B}^{*(\bm w)}} \partial\ell_{y_i}(\check{X}_{i:}\bm\beta) = \{0\}
	\quad\Leftrightarrow\quad
	\bigcup_{\bm\beta\in{\cal B}^{*(\bm w)}} \check{X}_{i:}\bm\beta \subseteq {\cal Z}[\ell_{y_i}]
	\quad\Leftrightarrow\quad
	[\min_{\bm\beta\in{\cal B}^{*(\bm w)}} \check{X}_{i:}\bm\beta, \max_{\bm\beta\in{\cal B}^{*(\bm w)}} \check{X}_{i:}\bm\beta] \subseteq {\cal Z}[\ell_{y_i}] \\
& \Leftrightarrow\quad \left[\check{X}_{i:}\hat{\bm\beta} - \|\check{X}_{i:}\|_2 r(\bm w, \bm\gamma, \kappa, \hat{\bm\beta}, \hat{\bm\alpha}),~
	\check{X}_{i:}\hat{\bm\beta} + \|\check{X}_{i:}\|_2 r(\bm w, \bm\gamma, \kappa, \hat{\bm\beta}, \hat{\bm\alpha}) \right]
	\subseteq {\cal Z}[\ell_{y_i}].
	\tag{$\because$ Lemma \ref{lem:optimize-linear}}
\end{align*}
\end{proof}

\subsection{Proof of Lemma \ref{lem:gap-sphere-dual}} \label{app:gap-sphere-dual}

\begin{proof}
The proof is almost the same as that for Lemma \ref{lem:gap-sphere-primal} (see Appendix \ref{app:gap-sphere-primal}), but we additionally need to show that $-D_{\bm w}$ is $((\min_{i\in[n]} w_i \gamma_i^2) / \mu)$-strongly convex (in this case $D_{\bm w}$ is called \emph{strongly concave}).

As discussed in Lemma \ref{lem:strong-convexity-smoothness}, $-\ell^*_{y_i}(t)$ is $(1/\mu)$-strongly convex,
that is, $-\ell^*_{y_i}(t) - (1/2\mu)t^2$ is convex. Thus,
\begin{itemize}
\item $-\ell^*_{y_i}(-\gamma_i \alpha_i) - (1/2\mu)(\gamma_i \alpha_i)^2$ is convex with respect to $\alpha_i$,
\item $-w_i \ell^*_{y_i}(-\gamma_i \alpha_i) - (w_i \gamma_i^2 /2\mu) \alpha_i^2$ is convex with respect to $\alpha_i$,
\item $-\sum_{i=1}^n w_i \ell^*_{y_i}(-\gamma_i \alpha_i) - \sum_{i=1}^n (w_i \gamma_i^2 /2\mu) \alpha_i^2$ is convex with respect to $\bm\alpha$.
\end{itemize}
So, $-\sum_{i=1}^n w_i \ell^*_{y_i}(-\gamma_i \alpha_i)$ is convex with respect to $\bm\alpha$ even subtracted by $\sum_{k=1}^n [\min_{i\in[n]} (w_i \gamma_i^2 /2\mu)] \alpha_k^2 = (1/2)[\min_{i\in[n]} (w_i \gamma_i^2 /\mu)] \|\bm\alpha\|_2^2$.
\end{proof}


\subsection{Proof of Lemma \ref{lem:maximize-convex-quadratic}} \label{app:maximize-convex-quadratic}

\begin{lemma}
For the optimization problem
\begin{align}
& \max_{\bm w\in{\cal W}} \bm w^\top A \bm w + 2\bm b^\top\bm w, \tag{\eqref{eq:maximize-convex-quadratic} restated} \\
& \text{subject to}
	\quad {\cal W} := \{ \bm w\in\mathbb{R}^n \mid \|\bm w - \tilde{\bm w}\|_2\leq S \}, \nonumber\\
& \text{where}
	\quad \tilde{\bm w}\in\mathbb{R}^n,
	\quad \bm b\in\mathbb{R}^n, \nonumber\\
& \phantom{\text{where}}
	\quad A\in\mathbb{R}^{n\times n}:~\text{symmetric, positive semidefinite, nonzero,} \nonumber
\end{align}
its stationary points are obtained as the solution of the following equations with respect to $\bm w$ and $\nu\in\mathbb{R}$:
\begin{align}
& A \bm w + \bm b - \nu(\bm w - \tilde{\bm w}) = \bm 0, \label{eq:lagrangian-target} \\
& \|\bm w - \tilde{\bm w}\|_2 = S. \label{eq:lagrangian-sphere}
\end{align}
Also, when both \eqref{eq:lagrangian-target} and \eqref{eq:lagrangian-sphere} are satisfied,
the function to be maximized is calculated as
\begin{align}
\bm w^\top A \bm w + 2\bm b^\top\bm w
	= \nu S^2 + (\nu \tilde{\bm w} + \bm b)^\top (\bm w - \tilde{\bm w}) + \tilde{\bm w}^\top \bm b.
	\label{eq:maximization-replaced}
\end{align}
\end{lemma}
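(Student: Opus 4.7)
My plan is to proceed by standard Lagrangian/KKT analysis for the constrained problem \eqref{eq:maximize-convex-quadratic}, and then verify the value identity \eqref{eq:maximization-replaced} by direct algebraic substitution using the stationarity equation.

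First I would argue that at a maximizer the ball constraint must be active, i.e.\ $\|\bm w - \tilde{\bm w}\|_2 = S$, which is exactly \eqref{eq:lagrangian-sphere}. Because $A$ is symmetric positive semidefinite, the objective $\bm w^\top A \bm w + 2 \bm b^\top \bm w$ is convex in $\bm w$, and the maximum of a convex function over a compact convex set is attained on its boundary; for ${\cal W}$ the boundary is the sphere $\|\bm w - \tilde{\bm w}\|_2 = S$. In particular any interior critical point (satisfying the unconstrained stationarity $2 A \bm w + 2 \bm b = \bm 0$) can only be a local minimum of a convex function, never a maximum, so it is legitimate to restrict attention to the sphere. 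I would then form the Lagrangian
\[
\mathcal{L}(\bm w, \nu) := \bm w^\top A \bm w + 2 \bm b^\top \bm w - \nu\bigl(\|\bm w - \tilde{\bm w}\|_2^2 - S^2\bigr),
\]
and set $\nabla_{\bm w}\mathcal{L} = \bm 0$. Direct differentiation yields $2 A \bm w + 2 \bm b - 2\nu(\bm w - \tilde{\bm w}) = \bm 0$, which after dividing by $2$ is exactly \eqref{eq:lagrangian-target}; because the spherical constraint is treated as an equality here, the multiplier $\nu$ is unconstrained in sign and hence ranges over $\mathbb{R}$, matching the lemma's statement.

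For the value identity \eqref{eq:maximization-replaced}, the key step is to use \eqref{eq:lagrangian-target} to eliminate $A \bm w$. From that equation, $A \bm w = \nu(\bm w - \tilde{\bm w}) - \bm b$, so $\bm w^\top A \bm w = \nu\, \bm w^\top (\bm w - \tilde{\bm w}) - \bm w^\top \bm b$, and adding $2\bm b^\top \bm w$ gives $\nu\, \bm w^\top (\bm w - \tilde{\bm w}) + \bm b^\top \bm w$. I would then split $\bm w = (\bm w - \tilde{\bm w}) + \tilde{\bm w}$ inside both inner products: the first term becomes $\|\bm w - \tilde{\bm w}\|_2^2 + \tilde{\bm w}^\top(\bm w - \tilde{\bm w}) = S^2 + \tilde{\bm w}^\top(\bm w - \tilde{\bm w})$ by \eqref{eq:lagrangian-sphere}, while $\bm b^\top \bm w = \bm b^\top(\bm w - \tilde{\bm w}) + \bm b^\top \tilde{\bm w}$. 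Collecting gives $\nu S^2 + (\nu \tilde{\bm w} + \bm b)^\top (\bm w - \tilde{\bm w}) + \tilde{\bm w}^\top \bm b$, which is \eqref{eq:maximization-replaced}.

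The only subtle point, rather than a real obstacle, is justifying that the equality-constrained KKT system truly captures every candidate maximizer used by Lemma \ref{lem:maximize-convex-quadratic}. This is resolved by the boundary argument above: the nonzero-$A$ plus convex-objective combination rules out interior maxima, so every global maximizer satisfies both \eqref{eq:lagrangian-target} and \eqref{eq:lagrangian-sphere}. The remaining work in Lemma \ref{lem:maximize-convex-quadratic} is then to enumerate the $(\bm w, \nu)$ pairs solving this system via the eigendecomposition of $A$, which is exactly how $\mathcal{T}(\nu) = S^2$ arises from substituting the explicit solution of \eqref{eq:lagrangian-target} back into \eqref{eq:lagrangian-sphere}.
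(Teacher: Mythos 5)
Your proposal is correct and follows essentially the same route as the paper: the maximum of the convex (nonconstant, since $A\neq 0$) objective over the ball is attained on the sphere (the paper cites Theorem 32.1 of Rockafellar for this boundary/extreme-point fact), the stationarity system \eqref{eq:lagrangian-target}--\eqref{eq:lagrangian-sphere} comes from the same equality-constrained Lagrangian with sign-unrestricted $\nu$, and the value identity \eqref{eq:maximization-replaced} is verified by the same substitution $A\bm w = \nu(\bm w - \tilde{\bm w}) - \bm b$ followed by splitting $\bm w = (\bm w - \tilde{\bm w}) + \tilde{\bm w}$ and using $\|\bm w - \tilde{\bm w}\|_2 = S$. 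No gaps.
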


\begin{proof}
First, $\bm w^\top A \bm w + 2\bm b^\top\bm w$ is convex and not constant.
Then we can show that \eqref{eq:maximize-convex-quadratic} is optimized in $\{ \bm w\in\mathbb{R}^n \mid \|\bm w - \tilde{\bm w}\|_2 = S \}$, that is, at the surface of the hyperball ${\cal W}$
(Theorem 32.1 of \cite{rockafellar1970convex}). This proves \eqref{eq:lagrangian-sphere}.
Moreover, with the fact, we write the Lagrangian function with Lagrange multiplier $\nu\in\mathbb{R}$ as:
\begin{align*}
L(\bm w, \nu) := \bm w^\top A \bm w + 2\bm b^\top\bm w - \nu(\|\bm w - \tilde{\bm w}\|_2^2 - S^2).
\end{align*}
Then, due to the property of Lagrange multiplier,
the stationary points of \eqref{eq:maximize-convex-quadratic} are obtained as
\begin{align*}
& \frac{\partial L}{\partial\bm w} = 2 A \bm w + 2\bm b - 2 \nu(\bm w - \tilde{\bm w}) = 0, \\
& \frac{\partial L}{\partial\nu} = \|\bm w - \tilde{\bm w}\|_2^2 - S^2 = 0,
\end{align*}
where the former derives \eqref{eq:lagrangian-target}.

Finally we show \eqref{eq:maximization-replaced}.
If both \eqref{eq:lagrangian-target} and \eqref{eq:lagrangian-sphere} are satisfied,
\begin{align}
\bm w^\top A\bm w + 2 \bm b^\top \bm w
& = \bm w^\top(\nu(\bm w - \tilde{\bm w}) - \bm b) + 2 \bm b^\top \bm w \tag{$\because$~\eqref{eq:lagrangian-target}}\\
& = \nu\bm w^\top(\bm w - \tilde{\bm w}) + \bm b^\top \bm w \nonumber\\
& = \nu(\bm w - \tilde{\bm w})^\top(\bm w - \tilde{\bm w}) + \nu\tilde{\bm w}^\top(\bm w - \tilde{\bm w}) + \bm b^\top(\bm w - \tilde{\bm w}) + \bm b^\top\tilde{\bm w} \nonumber\\
& = \nu S^2 + \nu\tilde{\bm w}^\top(\bm w - \tilde{\bm w}) + \bm b^\top(\bm w - \tilde{\bm w}) + \bm b^\top\tilde{\bm w} \tag{$\because$~\eqref{eq:lagrangian-sphere}}\\
& = \nu S^2 + (\nu\tilde{\bm w} + \bm b)^\top(\bm w - \tilde{\bm w}) + \bm b^\top\tilde{\bm w} \tag{\eqref{eq:maximization-replaced} restated}
\end{align}
\end{proof}

\begin{proof}[Proof of Lemma \ref{lem:maximize-convex-quadratic}]
The condition \eqref{eq:lagrangian-target} is calculated as
\begin{align*}
& A \bm w + \bm b = \nu(\bm w - \tilde{\bm w}), \\
& (A - \nu I)(\bm w - \tilde{\bm w}) = -A\tilde{\bm w} - \bm b.
\end{align*}
Here, let us apply eigendecomposition of $A$, denoted by $A = Q^\top\Phi Q$,
where $Q\in\mathbb{R}^{n\times n}$ is orthogonal ($QQ^\top = Q^\top Q = I$) and
$\Phi := \mathrm{diag}(\phi_1, \phi_2, \dots, \phi_n)$ is a diagonal matrix consisting of eigenvalues of $A$.
Such a decomposition is assured to exist since $A$ is assumed to be symmetric and positive semidefinite.
Then,
\begin{align}
& (Q^\top\Phi Q - \nu I)(\bm w - \tilde{\bm w}) = -Q^\top\Phi Q\tilde{\bm w} - \bm b, \nonumber\\
& Q^\top(\Phi - \nu I) Q (\bm w - \tilde{\bm w}) = -Q^\top\Phi Q\tilde{\bm w} - \bm b, \nonumber\\
& (\Phi - \nu I) \bm\tau = \bm\xi,
	\quad
	(\text{where}
	\quad \bm\tau := Q (\bm w - \tilde{\bm w}),
	\quad \bm\xi := -\Phi Q\tilde{\bm w} - Q\bm b \in\mathbb{R}^n,) \label{eq:lagrangian-result-w} \\
& \forall i\in[n]:\quad (\phi_i - \nu) \tau_i = \xi_i. \label{eq:equation-by-tau}
\end{align}
Note that we have to be also aware of the constraint
\begin{align}
S = \|\bm\tau\|_2 = \sqrt{\bm\tau^\top\bm\tau} = \sqrt{(\bm w - \tilde{\bm w})^\top Q^\top Q (\bm w - \tilde{\bm w})} = \|\bm w - \tilde{\bm w}\|_2. \label{eq:tau-constraint}
\end{align}

Here, we consider these two cases.
\begin{enumerate}
\item First, consider the case when $(\Phi - \nu I)$ is nonsingular, that is, when $\nu$ is different from any of $\phi_1, \phi_2, \dots, \phi_n$. Then, from \eqref{eq:tau-constraint} we have
	\begin{align}
	& S^2
		= \|\bm\tau\|_2 = \sum_{i=1}^n \tau_i^2
		= \sum_{i=1}^n \left(\frac{\xi_i}{\nu - \phi_i}\right)^2 \quad\bigl(=: {\cal T}(\nu)\bigr). \label{eq:lagrangian-result-nu-equation}
	\end{align}
	So, values of \eqref{eq:maximize-convex-quadratic} for all stationary points with respect to $\bm w$ and $\nu$ (on condition that $(\Phi - \nu I)$ is nonsingular) can be obtained by computing \eqref{eq:maximization-replaced} for each $\nu$ satisfying \eqref{eq:lagrangian-result-nu-equation}, that is,
	\begin{itemize}
	\item for such $\nu$ computing $\bm\tau$ by \eqref{eq:equation-by-tau}, and
	\item computing \eqref{eq:maximization-replaced} as
		$\nu S^2 + (\nu\tilde{\bm w} + \bm b)^\top(\bm w - \tilde{\bm w}) + \bm b^\top\tilde{\bm w}
		= \nu S^2 + (\nu\tilde{\bm w} + \bm b)^\top Q^\top \bm\tau + \bm b^\top\tilde{\bm w}$.
	\end{itemize}

\item Secondly, consider the case when $(\Phi - \nu I)$ is nonsingular, that is, when $\nu$ is equal to one of $\phi_1, \phi_2, \dots, \phi_n$.
	First, given $\nu$, let ${\cal U}_\nu := \{ i \mid i\in[n],~\phi_i = \nu \}$ be the indices of $\{\phi_i\}_i$ equal to $\nu$ (this may include more than one indices), and ${\cal F}_\nu := [n]\setminus{\cal U}_\nu$. Note that, by assumption, ${\cal U}_\nu$ is not empty.
	Then, all stationary points of \eqref{eq:maximize-convex-quadratic} with respect to $\bm w$ and $\nu$ (on condition that $(\Phi - \nu I)$ is singular) can be found by computing the followings for each $\nu\in\{\phi_1, \phi_2, \dots, \phi_n\}$ (duplication excluded):
	\begin{itemize}
	\item If $\xi_i\neq 0$ for at least one $i\in{\cal U}_\nu$, the equation \eqref{eq:equation-by-tau} cannot hold.
	\item If $\xi_i = 0$ for all $i\in{\cal N}_\nu$, the equation \eqref{eq:equation-by-tau} may hold.
		So we calculate $\bm\tau$ that maximizes \eqref{eq:maximize-convex-quadratic} as follows:
		\begin{itemize}
		\item Fix $\tau_i = \xi_i / (\phi_i - \nu)$ for $i\in{\cal F}_\nu$.
		\item Set the constraint $\sum_{i\in{\cal U}_\nu} \tau_i^2 = S^2 - \sum_{i\in{\cal F}_\nu} \tau_i^2$ (due to \eqref{eq:tau-constraint}).
		\item Maximize \eqref{eq:maximize-convex-quadratic} with respect to $\{\tau_i\}_{i\in{\cal U}_\nu}$
			under the constraints above. Here, by \eqref{eq:maximization-replaced} we have only to calculate
			\begin{align}
			& \max_{\bm\tau\in\mathbb{R}^n} [\nu S^2 + (\nu\tilde{\bm w} + \bm b)^\top(\bm w - \tilde{\bm w}) + \bm b^\top\tilde{\bm w}], \label{eq:singular-max}\\
			& \text{subject to}
				\quad\forall i\in{\cal F}_\nu:\quad \tau_i = \frac{\xi_i}{\phi_i - \nu}, \nonumber\\
			& \phantom{\text{subject to}}
				\quad \sum_{i\in{\cal U}_\nu} \tau_i^2 = S^2 - \sum_{i\in{\cal F}_\nu} \tau_i^2, \nonumber
			\end{align}
			which is easily computed by Lemma \ref{lem:optimize-linear}.
			The value of the maximization result is equal to that of \eqref{eq:maximize-convex-quadratic} on condition that $\nu$ is specified above.
		\end{itemize}
		So, collecting these result and taking the largest one, the maximization (on condition that $(\Phi - \nu I)$ is singular) is completed.
	\end{itemize}
\end{enumerate}
Taking the maximum of the two cases, we have the maximization result of \eqref{eq:maximize-convex-quadratic}.
\end{proof}

\subsection{Proof of Lemma \ref{lem:find-invsq}} \label{app:find-invsq}

\begin{proof}
We show the statements in the lemma that, if $\phi_{e_k} < \phi_{e_{k+1}}$ ($k\in[N-1]$), then ${\cal T}(\nu)$ is a convex function
in the interval $(\phi_{e_k}, \phi_{e_{k+1}})$ with $\lim_{\nu\to\phi_{e_k}+0} = \lim_{\nu\to\phi_{e_{k+1}}-0} = +\infty$. Then the conclusion immediately follows.

The latter statement clearly holds. The former statement is proved by directly computing the derivative.
\begin{align*}
& \frac{d}{d\nu}{\cal T}(\nu)
	= \frac{d}{d\nu}\sum_{i=1}^n \left(\frac{\xi_i}{\nu - \phi_i}\right)^2
	= -2\sum_{i=1}^n \frac{\xi_i^2}{(\nu - \phi_i)^3}.
\end{align*}
It is an increasing function with respect to $\nu$, as long as $\nu$ does not match any of $\{\phi_i\}_{i=1}^n$
such that $\xi_i\neq 0$. So it is convex in the interval $\phi_{e_k} < \nu < \phi_{e_{k+1}}$.
\end{proof}

\section{Detailed Calculations} \label{app:details-for-problems}

In this appendix we describe detailed calculations omitted in the main paper.

\subsection{Calculations for L1-loss L2-regularized SVM (Section \ref{sec:l1loss-l2reg-svm})} \label{app:details-l1loss-l2reg-svm}

For this setup, we can calculate as
\begin{align*}
& \rho^*(\bm\beta) := \frac{1}{2\lambda}\|\bm\beta\|_2^2,
\quad
	\ell^*_y(t) := \begin{cases}
	t, & (-1 \leq t \leq 0) \\
	+\infty, & (\text{otherwise})
	\end{cases}
\quad
	\partial\rho^*(\bm\beta) := \left\{ \frac{1}{\lambda}\bm\beta \right\},
\quad
	\partial\ell_y(t) := \begin{cases}
	\{ -1 \}, & (t < 1) \\
	[-1, 0], & (t = 1) \\
	\{ 0 \}. & (t > 1)
	\end{cases}
\end{align*}
Then we have the dual problem in the main paper \eqref{eq:dual-l1loss-l2reg-svm}.

\subsection{Calculations for L2-loss L1-regularized SVM (Section \ref{sec:l2loss-l1reg-svm})} \label{app:details-l2loss-l1reg-svm}

For this setup, we can calculate as
\begin{align*}
& \rho^*(\bm\beta) := \begin{cases}
	0, & (\beta_d=0,~\forall j\in[d-1]:~|\beta_j|\leq\lambda) \\
	+\infty, & (\text{otherwise})
	\end{cases}
\quad
	\ell^*_y(t) := \begin{cases}
	\frac{t^2+4t}{4}, & (t \leq 0) \\
	+\infty, & (\text{otherwise})
	\end{cases}
	\\
& \forall j\in[d-1]:~[\partial\rho^*(\bm\beta)]_j := \begin{cases}
	-\infty, & (\beta_j < -\lambda) \\
	[-\infty, 0], & (\beta_j = -\lambda) \\
	0, & (|\beta_j|<\lambda) \\
	[0, +\infty], & (\beta_j = \lambda) \\
	+\infty, & (\beta_j > \lambda)
	\end{cases}
\quad
	[\partial\rho^*(\bm\beta)]_d := \begin{cases}
	-\infty, & (\beta_d < 0) \\
	[-\infty, +\infty], & (\beta_d = 0) \\
	+\infty, & (\beta_d > 0)
	\end{cases}
	\\
& \partial\ell_y(t) := -2\max\{0, 1 - t\}.
\end{align*}

Then, setting $\gamma_i = \lambda$ for all $i\in[n]$, the dual objective function is described as
\begin{align}
& D_{\bm w}(\bm\alpha) = \begin{cases}
	-\sum_{i=1}^n w_i \frac{\lambda^2 \alpha^2_i - 4\lambda\alpha_i}{4}, & (\text{if~\eqref{eq:l2loss-l1reg-constraint-base}~are~satisfied}) \\
	+\infty, & (\text{otherwise})
	\end{cases}
	\label{eq:dual-l2loss-l1reg-svm-base}
\end{align}
where
\begin{subequations}
\label{eq:l2loss-l1reg-constraint-base}
\begin{align}
& \lambda\alpha_i \geq 0
	\Leftrightarrow \alpha_i \geq 0, \\
& \forall j\in[d-1]:~
	| ((\lambda\bm 1_n \otimes \bm w)\otimes\check{X}_{:j})^\top \bm\alpha | \leq \lambda
	\Leftrightarrow | (\bm w\otimes\check{X}_{:j})^\top \bm\alpha | \leq 1, \\
& ((\lambda\bm 1_n \otimes \bm w)\otimes\check{X}_{:d})^\top \bm\alpha = 0
	\Leftrightarrow (\bm w\otimes\check{X}_{:d})^\top \bm\alpha = 0.
\end{align}
\end{subequations}
Optimality conditions \eqref{eq:KKT-dual2primal} and \eqref{eq:KKT-primal2dual} are described as
\begin{align}
& \forall j\in[d-1]:~
	| (\lambda\bm 1_n\otimes\bm w \otimes\check{X}_{:j})^\top \bm\alpha^{*(\bm w)} | < \lambda
	\Leftrightarrow | (\bm w\otimes\check{X}_{:j})^\top \bm\alpha^{*(\bm w)} | < 1
	\Rightarrow \beta^{*(\bm w)}_j = 0, \\
& \forall i\in[n]:\quad \lambda\alpha^{*(\bm w)}_i = 2\max\{0, 1 - \check{X}_{i:}\bm\beta^{*(\bm w)}\}.
\end{align}

\section{Application of Safe Sample Screening to Kernelized Features} \label{app:kernelized}

The kernel method in ML means computation methods when the input variable vector of a sample $\bm x\in\mathbb{R}^d$ cannot be specifically obtained
(this includes the case when $d$ is infinite),
but for the input variable vectors for any two samples $\bm x, \bm x^\prime\in\mathbb{R}^d$ its inner product $\bm x^\top \bm x^\prime$ can be obtained.
In such a case, we cannot discuss SfS since we cannot obtain each feature specifically,
however, we can discuss SsS.

We show that the SsS rules for L1-loss L2-regularized SVM (Section \ref{sec:l1loss-l2reg-svm})
can be applied even if the features are kernelized.

First, if features are kernelized, we cannot obtain either $X$ or $\bm\beta^{*(\tilde{\bm w})}$ specifically.
However, since we can obtain $\bm\alpha^{*(\tilde{\bm w})}$,
with \eqref{eq:KKT-dual2primal-l1loss-l2reg-svm} we have
\begin{align}
\forall \bm x\in\mathbb{R}^d:~
\bm x^\top \bm\beta^{*(\tilde{\bm w})} = \frac{1}{\lambda}\bm x^\top (\bm w\sqtimes\check{X})^\top \bm\alpha^{*(\tilde{\bm w})}
= \frac{1}{\lambda}\sum_{i=1}^n w_i \alpha_i^{*(\tilde{\bm w})} (\bm x^\top \check{X}_{i:}).
\label{eq:kernelized-inner-product}
\end{align}
This means that we can calculate the inner product of $\bm\beta^{*(\tilde{\bm w})}$ and any vector.

Then, in order to calculate the quantity \eqref{eq:safe-sample-screening-l1loss-l2reg-svm} to conduct SsS,
we have only to calculate
\begin{itemize}
\item $\check{X}_{i:}\bm\beta^{*(\tilde{\bm w})}$ can be calculated by \eqref{eq:kernelized-inner-product},
\item $\|\check{X}_{i:}\|_2 = \sqrt{\check{X}_{i:}^\top \check{X}_{i:}}$ is obtained as the kernel value, and
\item $P_{\bm w}(\bm\beta^{*(\tilde{\bm w})}) - D_{\bm w}(\bm\alpha^{*(\tilde{\bm w})})$ can be calculated by \eqref{eq:kernelized-inner-product} and kernel values since two variables whose values cannot be specifically obtained ($\tilde{X}$ and $\bm\beta^{*(\tilde{\bm w})}$) appears only as inner products.
\end{itemize}

So, all values needed to derive SsS rules \eqref{eq:safe-sample-screening-l1loss-l2reg-svm}
can be computed even if features are kernelized.

\section{Details of Experiments}

\subsection{Detailed Experimental Setup} \label{app:experimental-setup}

The criteria of selecting datasets (Table \ref{tab:dataset-SS}) and detailed setups are as follows:
\begin{itemize}
\item All of the datasets are downloaded from LIBSVM dataset \cite{libsvmDataset}.
	We used scaled datasets for ones used in DRSfS or only scaled datasets are provided
	(``ionosphere'', ``sonar'' and ``splice'').
	We used training datasets only if test datasets are provided separately
	(``splice'', ``svmguide1'' and ``madelon'').
\item For DRSsS,
	we selected datasets from LIBSVM dataset containing 100 to 10,000 samples,
	100 or fewer features, and the area under the curve (AUC) of
	the receiver operating characteristic (ROC) is 0.9 or higher
	for the regularization strengths ($\lambda$) we examined
	so that they tend to facilitate more effective sample screening.
\item For DRSfS,
	we selected datasets from LIBSVM dataset containing 50 to 1,000 features,
	10,000 or fewer samples,
	and containing no categorical features.
	Also, due to computational constraints, we excluded features
	that have at least one zero (marked ``$\dagger$'' in Table \ref{tab:dataset-SS}).
	As a result, one feature from ``madelon'' and one from ``sonar'' have been excluded.
\item In the table, the column ``$d$'' denotes the number of features including the intercept feature (Remark \ref{rem:intercept}).
\end{itemize}

The choice of regularization hyperparameter $\lambda$, based on the characteristics of the data, is as follows:
\begin{itemize}
\item For DRSsS, we set $\lambda$ as $n$, $n\times 10^{-0.5}$, $n\times 10^{-1.0}$, $\ldots$, $n\times 10^{-3.0}$.
	(For DRSsS with DL, we set 1000 instead of $n$.)
	This is because the effect of $\lambda$ gets weaker for larger $n$.
\item For DRSfS, we determine $\lambda$ based on $\lammax$, defined as the smallest $\lambda$ for which $\beta^{*(\bm w)}_j = 0$ for any $j\in[d-1]$ explained below. We then set $\lambda$ as $\lammax$, $\lammax\times 10^{-1/3}$, $\lammax\times 10^{-2/3}$, $\ldots$, $\lammax\times 10^{-2}$.
\end{itemize}

Finally, we show the calculation of $\lammax$ for L2-loss L1-regularized SVM.
By \eqref{eq:KKT-dual2primal-l2loss-l1reg-svm},
we would like to find $\lambda$ so that $| (\bm w\otimes\check{X}_{:j})^\top \bm\alpha^{*(\bm w)} | < 1$
for all $j\in[d-1]$.
In order to judge this, we need $\bm\alpha^{*(\bm w)}$, which is calculated as follows:
\begin{itemize}
\item Solve the primal problem \eqref{eq:primal} for L2-loss L1-regularized SVM by fixing $\beta^{*(\bm w)}_j = 0$ for any $j\in[d-1]$, that is,
	\begin{align*}
	& \beta^{*(\bm w)}_d = \argmin_{\beta_d} \sum_{i=1}^n w_i \ell_{y_i}(\check{x}_{id} \beta_d)
		= \argmin_{\beta_d} \sum_{i=1}^n w_i (\max\{0, 1 - y_i \beta_d\})^2 \\
	& = \argmin_{\beta_d}
		\sum_{i\in[n],~y_i=+1} w_i (\max\{0, 1 - \beta_d\})^2
		+ \sum_{i\in[n],~y_i=-1} w_i (\max\{0, 1 + \beta_d\})^2 \\
	& = \frac{\sum_{i\in[n],~y_i=+1} w_i - \sum_{i\in[n],~y_i=-1} w_i}{\sum_{i=1}^n w_i}.
	\end{align*}
\item With $\beta^{*(\bm w)}_d$ computed above and $\beta^{*(\bm w)}_j = 0$ for any $j\in[d-1]$,
	calculate
	$\bm\alpha^{\$} = \lambda\bm\alpha^{*(\bm w)} = [2\max\{0, 1 - \check{X}_{i:}\bm\beta^{*(\bm w)}\}]_{i=1}^n$
	by \eqref{eq:KKT-primal2dual-l2loss-l1reg-svm}.
\item If $| (\bm w\otimes\check{X}_{:j})^\top \bm\alpha^{\$}) | < \lambda$ for all $j\in[d-1]$,
	then $\beta^{*(\bm w)}_j = 0$ for any $j\in[d-1]$.
	So, we set $\lammax = \max_{j\in[d-1]} | (\bm w\otimes\check{X}_{:j})^\top \bm\alpha^{\$}) |$.
\end{itemize}

\subsection{All Experimental Results of Section \ref{sec:experiment-weight-changes}} \label{app:experiment}

For the experiment of Section \ref{sec:experiment-weight-changes},
ratios of screened samples by DRSsS setup is presented in Figure \ref{fig:SsS-ratio-all},
while ratios of screened features by DRSfS setup in Figure \ref{fig:SfS-ratio-all}.

\begin{figure}[t]
\begin{center}
\begin{tabular}{cc}
\begin{minipage}{0.48\hsize}
Dataset: australian
\\
\includegraphics[width=\hsize]{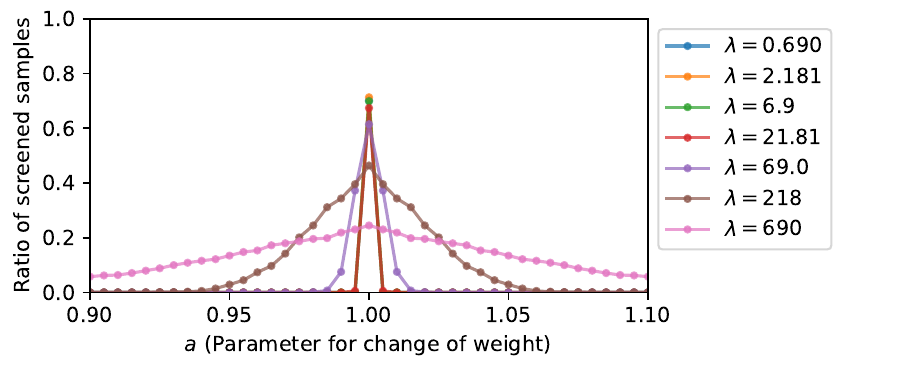}
\end{minipage}
&
\begin{minipage}{0.48\hsize}
Dataset: breast-cancer
\\
\includegraphics[width=\hsize]{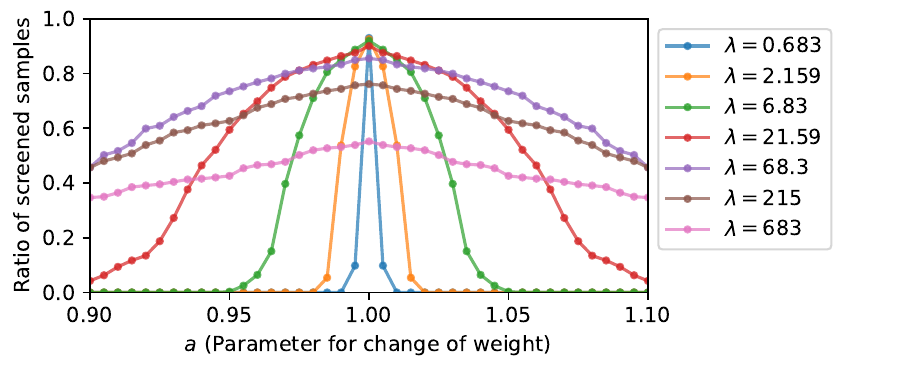}
\end{minipage}
\\
\begin{minipage}{0.48\hsize}
Dataset: heart
\\
\includegraphics[width=\hsize]{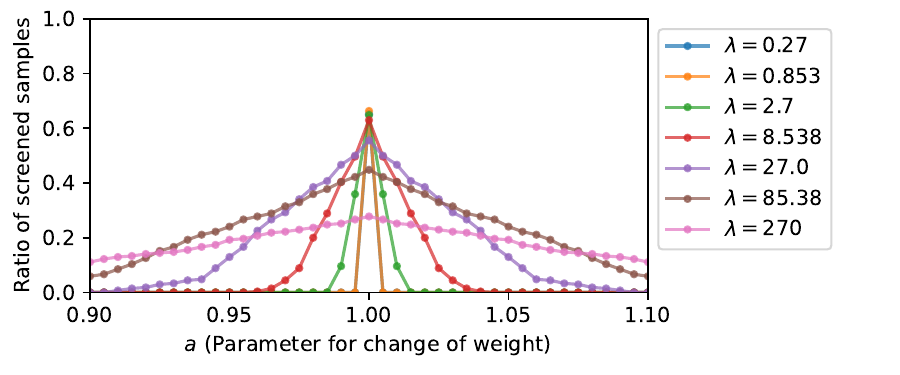}
\end{minipage}
&
\begin{minipage}{0.48\hsize}
Dataset: ionosphere
\\
\includegraphics[width=\hsize]{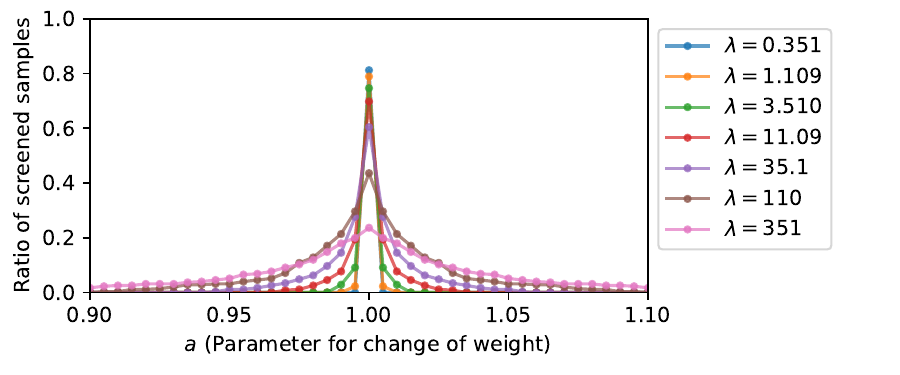}
\end{minipage}
\\
\begin{minipage}{0.48\hsize}
Dataset: sonar
\\
\includegraphics[width=\hsize]{sample-screening/svm_ss_screening_rate_Label_WeightL2Range_sonar_scale.pdf}
\end{minipage}
&
\begin{minipage}{0.48\hsize}
Dataset: splice
\\
\includegraphics[width=\hsize]{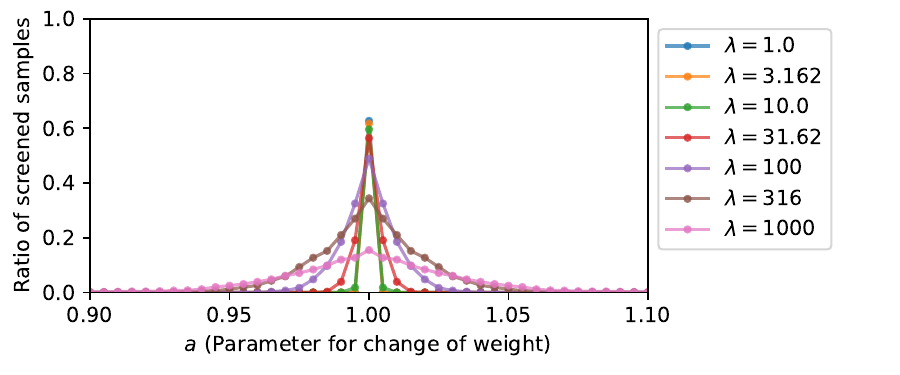}
\end{minipage}
\\
\begin{minipage}{0.48\hsize}
Dataset: svmguide1
\\
\includegraphics[width=\hsize]{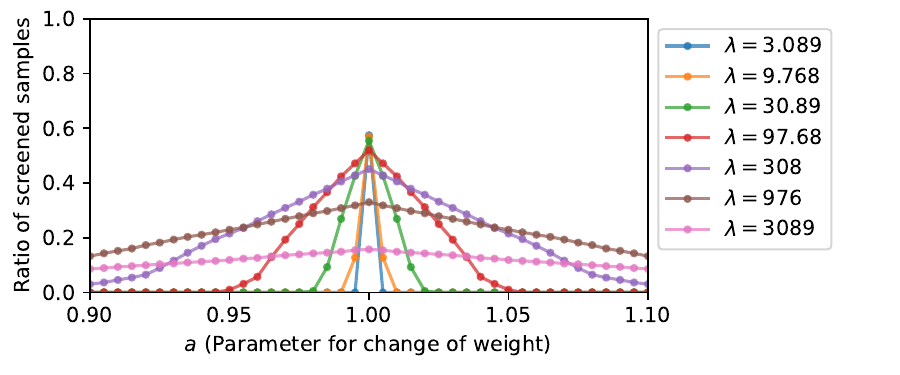}
\end{minipage}
&
~
\end{tabular}
\end{center}
\caption{Ratios of screened samples by DRSsS.}
\label{fig:SsS-ratio-all}
\end{figure}

\begin{figure}[t]
\begin{center}
\begin{tabular}{cc}
\begin{minipage}{0.48\hsize}
Dataset: madelon
\\
\includegraphics[width=\hsize]{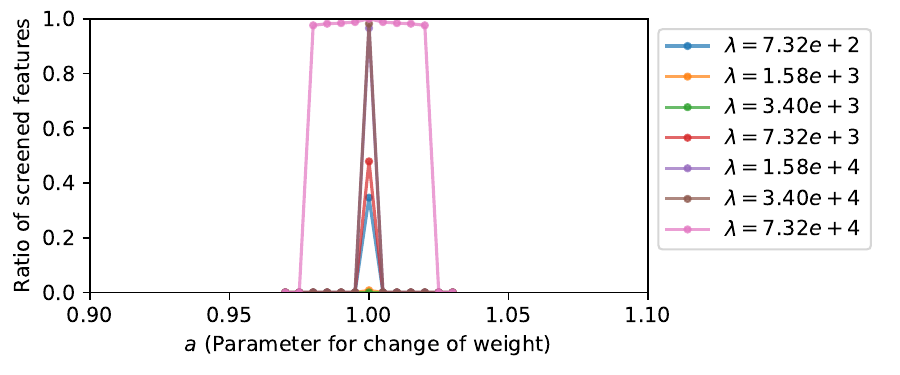}
\end{minipage}
&
\begin{minipage}{0.48\hsize}
Dataset: sonar
\\
\includegraphics[width=\hsize]{feature-screening/sfs_sonar.pdf}
\end{minipage}
\\
\begin{minipage}{0.48\hsize}
Dataset: splice
\\
\includegraphics[width=\hsize]{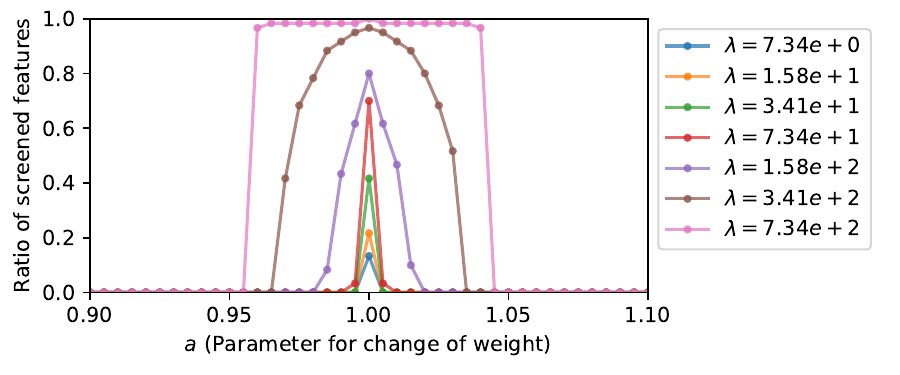}
\end{minipage}
&
~
\end{tabular}
\end{center}
\caption{Ratios of screened features by DRSfS.}
\label{fig:SfS-ratio-all}
\end{figure}


\end{document}